\theoremstyle{plain}
\newtheorem{theorem}{Theorem}[section]
\newtheorem{lemma}[theorem]{Lemma}
\theoremstyle{definition}
\newtheorem{assumption}[theorem]{Assumption}
\theoremstyle{remark}
\newcommand\Changes[1]{{#1}}
\title{Spending Thinking Time Wisely: Accelerating MCTS with Virtual Expansions}
\author{%
  Weirui Ye\thanks{{\small \texttt{ywr20@mails.tsinghua.edu.cn, gaoyangiiis@tsinghua.edu.cn}}}
  \quad
  Pieter Abbeel\thanks{{\small \texttt{pabbeel@berkeley.edu}}}
  \quad
  Yang Gao \footnotemark[1]\, \thanks{{\small \texttt{Corresponding author}}} \,$^\mathsection$ \\
  $^*$Tsinghua University, $^\dagger$UC Berkeley, $^\mathsection$ Shanghai Qi Zhi Institute
}
\begin{document}

\maketitle

\begin{abstract}
 One of the most important AI research questions is to trade off computation versus performance since ``perfect rationality" exists in theory but is impossible to achieve in practice. Recently, Monte-Carlo tree search (MCTS) has attracted considerable attention due to the significant performance improvement in various challenging domains. However, the expensive time cost during search severely restricts its scope for applications. This paper proposes the Virtual MCTS (V-MCTS), a variant of MCTS that spends more search time on harder states and less search time on simpler states adaptively. We give theoretical bounds of the proposed method and evaluate the performance and computations on $9 \times 9$ Go board games and Atari games. Experiments show that our method can achieve comparable performances to the original search algorithm while requiring less than $50\%$ search time on average. We believe that this approach is a viable alternative for tasks under limited time and resources. The code is available at \url{https://github.com/YeWR/V-MCTS.git}.
\end{abstract}

\section{Introduction}
When artificial intelligence was first studied in the 1950s, researchers have sought to find the solution to the question ``How to build an agent with perfect rationality". The term ``perfect rationality" ~\citep{carnap1962logical, newell1982knowledge, russell1994provably} here refers to the decision made with infinite amounts of computations. However, one can only solve small-scale problems without considering the practical computation time since classical search algorithms usually exhibit exponential running time. Therefore, recent AI \Changes{research} would no longer seek to achieve ``perfect rationality", but instead carefully trade-off computation versus the level of rationality. People have developed computational models like ``bounded optimality" to model these settings~\citep{russell1994provably}. The increasing level of rationality under the same computational budget has given us a lot of AI successes. Algorithms include the Monte-Carlo sampling algorithms, the variational inference algorithms, and using DNNs as universal function approximators \citep{coulom2006efficient, chaslot2008monte, gelly2011monte, silver2016mastering, hoffman2013stochastic}. 

Recently, MCTS-based RL algorithms have achieved much success, mainly on board games. The most notable achievement is that AlphaGo beats Hui Fan in 2015~\citep{silver2016mastering}. It is the first time \Changes{a computer program beat a human professional Go player}. Afterward, AlphaGo beats two top-ranking human players, Lee Sedol in 2016 and Jie Ke in 2017, the latter of which ranked first worldwide at the time. Later, MCTS-based RL algorithms \Changes{were} further extended to other board games and Atari games~\citep{schrittwieser2020mastering}. EfficientZero~\citep{ye2021mastering} significantly improves the sample efficiency of MCTS-based RL algorithms, \Changes{shedding} light on its future applications in the real world like robotics and self-driving. 

Despite the impressive performance of MCTS-based RL algorithms, they require massive \Changes{amounts of computation} to train and evaluate. For example, MuZero \citep{schrittwieser2020mastering} used 1000 TPUs trained for 12 hours to learn the game of Go, and for a single Atari game, it needs 40 TPUs to train 12 hours. Compared to previous algorithms on the Atari games benchmark, it needs around two orders of magnitude more compute. This prohibitively large computational requirement has slowed down both the further development of MCTS-based RL algorithms as well as its practical use. 

Under the hood, MCTS-based RL algorithms imagine the futures when taking different future action sequences. However, this imaging process for the current method is not computationally efficient. For example, AlphaGo needs to look ahead 1600 game states to place a single stone. On the contrary, top human professional players can only think through around 100-200 game states per minute~\citep{silver2016mastering}. Apart from the inefficiency, the current MCTS algorithm deals with easy and challenging cases with the same computational budget. However, human knows to use their time when it is most needed. 

In this paper, we aim to design new algorithms that save the computational time of the MCTS-based RL methods. 
We make three key \textbf{contributions}: (1) We present Virtual MCTS, a variant of MCTS, to approximate the vanilla MCTS search policies with less computation. Moreover, unlike previous pruning-based methods that focus on the selection or evaluation stage in MCTS, our method improves the search loop. It terminates the search iterations earlier adaptively when current states are simpler; (2) Theoretically, we provide some error bounds of the proposed method. Furthermore, the visualization results indicate that Virtual MCTS has a better computation and performance trade-off than vanilla MCTS; (3) Empirically, our method can save more than 50\% of search times on the challenging game Go $9 \times 9$ and more than 60\% on the visually complex Atari games while keeping comparable performances to those of vanilla MCTS.

\section{Related Work}

\subsection{Reinforcement Learning with MCTS}
For a long time, Computer Go has been regarded as a remarkably challenging game \citep{bouzy2001computer, cai2007computer}. Researchers attempt to use Monte-Carlo techniques that evaluate the value of the node state through random playouts \citep{bouzy2004monte, gelly2007combining, gelly2008achieving, silver2016mastering}.
\Changes{Afterward, UCT algorithms have generally been applied in Monte-Carlo tree search (MCTS) algorithms, which use UCB1 to select action at each node of the tree \citep{kocsis2006bandit}.}
Recently, MCTS-based RL methods \citep{silver2016mastering, silver2017mastering, silver2018general, schrittwieser2020mastering} have become increasingly popular and achieved super-human performances on board games because of their strong ability to search.

Modern MCTS-based RL algorithms include four stages in the \textbf{search loop}: selection, expansion, evaluation, and backpropagation. 
\Changes{The computation bottlenecks in vanilla MCTS come from the search loop, especially for the evaluation stage and the selection stage of each iteration.}
The selection stage is time-consuming when the search tree becomes wider and deeper. The evaluation stage is quite expensive because people attempt to evaluate the node value by random playouts to the end in previous researches. Due to the search loop, MCTS-based algorithms have multiple model inferences compared to other model-free RL methods like PPO \citep{schulman2017proximal} and SAC \citep{haarnoja2018soft}. 

\subsection{Acceleration of MCTS}
MCTS-based methods have proved their strong capability of solving complex games or tasks. However, the high computational cost of MCTS hinders its application to some real-time and more general scenarios. Therefore, numerous works are devoted to accelerating MCTS. For example, to make the selection stage more effective, some heuristic pruning methods \citep{gelly2006modification, wang2007modifications, sephton2014heuristic, baier2014mcts, baier2018mcts} aim to reduce the width and depth of the search tree with some heuristic functions. Furthermore, for more efficient evaluations, Lorentz \cite{lorentz2015early} proposed early playout termination of MCTS (MCTS-EPT) to stop the random playouts early and use an evaluation function to assess win or loss.
Moreover, Hsueh \textit{et al.} \cite{hsueh2016analysis} applied MCTS-EPT to the Chinese dark chess and proved its effectiveness. Afterward, similar ideas have been applied in the evaluation stage of AlphaGoZero \citep{silver2017mastering} and later MCTS-based methods \citep{silver2018general, schrittwieser2020mastering, ye2021mastering}. They evaluate the $Q$-values through a learnable evaluation network instead of running playouts to the end. 
Grill \textit{et al.} \cite{grill2020monte} propose a novel regularized policy optimization method based on AlphaZero to decrease the search budget of MCTS, which is from the optimization perspective. Danihelka \textit{et al.} \citep{danihelka2022policy} propose a policy improvement algorithm based on sampling actions without replacement, named Gumbel trick to achieve better performance when planning with few simulations.
However, these methods mentioned above focus on the specific stage of the search iteration or reduce the total budget through pruning and optimization methods, which are orthogonal to us. And few works targets at the search loop. Lan \textit{et al.} \citep{lan2020learning} propose DS-MCTS, which defines the uncertainty of MCTS and approximates it by extra DNNs with specific features for board games in training. During the evaluation, DS-MCTS will check periodically and stop the search if the state is certain.

\section{Background}

The AlphaGo series of work \citep{silver2016mastering, silver2017mastering, silver2018general, schrittwieser2020mastering} are all MCTS-based reinforcement learning algorithms. Those algorithms assume the environment transition dynamics are known or learn the environment dynamics. Based on the dynamics, they use the Monte-Carlo tree search (MCTS) as the policy improvement operator. I.e., taking in the current policy, MCTS returns a better policy with the search algorithm. The systematic search allows the MCTS-based RL algorithm to quickly improve the policy and perform much better in the setting where heavy reasoning is required. 

\subsection{MCTS}
\label{subsec:mcts}

This part briefly introduces the MCTS method implemented in reinforcement learning applications. 
As mentioned in the related works, modern MCTS-based RL algorithms include four stages in the search loop, namely selection, expansion, evaluation, and backpropagation. 

MCTS takes in the current states and generates a policy after the search loop of $N$ iterations. Here $N$ is a constant number of iterations set by the designer, regarded as the total budget. 
In the selection stage of each iteration, an action will be selected by maximizing over UCB. Specifically, AlphaZero \citep{silver2018general} and MuZero \citep{schrittwieser2020mastering} are developed based on a variant of UCB, P-UCT \citep{rosin2011multi} and have achieved great success on board games and Atari games. The formula of P-UCT is the Eq (\ref{eq:uct}): 
\begin{equation}
    \label{eq:uct}
    \begin{aligned}
        a^k &= \arg\max_{a \in \mathcal{A}} Q(s,a)+P(s,a)\frac{\sqrt{\sum_{b \in \mathcal{A}} N(s, b)}}{1+N(s,a)}  (c_1 + \log ((\sum_{b \in \mathcal{A}} N(s,b)+c_2+1) / c_2)),
    \end{aligned}
\end{equation}
where $k$ is the index of iteration, $\mathcal{A}$ is the action set, $Q(s, a)$ is the estimated Q-value, $P(s, a)$ is the policy prior obtained from neural networks, $N(s, a)$ is the visitations to select the action $a$ from the state $s$ and $c_1, c_2$ are hyper-parameters.
The output of MCTS is the visitation of each action of the root node. After $N$ search iterations, the final policy $\pi(s)$ is defined as the normalized root visitation distribution $\pi_N(s)$, where $\pi_{k}(s, a) = N(s, a) / \sum_{b \in \mathcal{A}} N(s, b) = N(s, a)/k, a \in \mathcal{A}$. For simplification, we use $\pi_k$ in place of $\pi_k(s)$ sometimes. And the detailed procedure of MCTS is introduced in Appendix.
In our method, we propose to approximate the final policy $\pi_N(s)$ with $\hat{\pi}_k(s)$, which we name as a virtual expanded policy, through a new expansion method and a termination rule. In this way, the number of iterations in MCTS can be reduced from $N$ to $k$.

\subsection{Computation Requirement}
Most of the computations in MCTS-based RL are in the MCTS procedure. Each action taken by MCTS requires $N$ times neural network evaluations, where $N$ is a constant number of iterations in the search loop. Traditional RL algorithms, such as PPO~\citep{schulman2017proximal} or DQN~\citep{mnih2015human}, only need a single neural network evaluation per action. Thus, MCTS-based RL is roughly $N$ times computationally more \Changes{expensive} than traditional RL algorithms. 
In practice, training a single Atari game needs 12 hours of computation time on 40 TPUs~\citep{schrittwieser2020mastering}. The computation need is roughly two orders of magnitude more than traditional RL algorithms~\citep{schulman2017proximal}, although the final performance of MuZero is much better. 

\section{Method}
We aim to spend more search time on harder states and less on easier states. Intuitively, human knows when to make a quick decision or a slow decision under different circumstances. Unfortunately, this situation-aware behavior is absent in current MCTS algorithms. Therefore, we propose an MCTS variant that terminates the search iteration adaptively.
It \Changes{consists of} two components: a novel expansion method named virtual expansion to estimate the final visitation based on the current partial tree; a termination rule that decides when to terminate based on the hardness of the current scenario. And we will display the adaptive mechanism through visualizations in Section \ref{sec:vis}.

\subsection{Termination Rule}
\label{sec:termination}

We propose to terminate the search loop earlier based on the current tree statistics. Intuitively, we no longer need to search further if we find that recent searches have little changes on the root visitation distribution. With this intuition in mind, we propose a simple modification to the MCTS search algorithm. As mentioned in \ref{subsec:mcts}, $\pi_k(s)$ is the policy defined by the visitations of the root state at iteration $k$. Let $\Delta_s(i, j)$ be the L1 difference of $\pi_i(s), \pi_j(s)$, namely $\Delta_s(i, j) = \left|\left|\pi_i(s) - \pi_{j}(s)\right|\right|_1$.
Then we terminate the search loop when we have searched at least $rN$ iterations and $\Delta_s(k, k/2) < \epsilon$, where $\epsilon$ is a tolerance hyper-parameter, $r \in (0,1)$ is the ratio of the minimum search budget and $N$ is the full search iterations. 
We show that under certain conditions, a bound on $\Delta_s(k, k/2)$ implies a bound on $\Delta_s(k, N)$. $\Delta_s(k, N)$ measures the distance between the current policy $\pi_k(s)$ and the oracle policy $\pi_N(s)$. In this way, $\Delta_s(k, k/2)$ reflects the hardness of the state $s$. Consequently, once the gap is small enough, it is unnecessary for more search iterations.

\subsection{Virtual Expansion in MCTS}
\label{sec:virtual_expansion}

\begin{minipage}{0.49\textwidth}
\begin{algorithm}[H]
    \centering
    \caption{Iteration of vanilla MCTS}
    \label{alg:expand}
    \begin{algorithmic}[1]
        \State Current $k$-th iteration step:
        \State {\bfseries Input:} $\mathcal{A}, P, Q_k(s, a), N_k(s, a)$
        \State Initialize: $s \leftarrow s_{\text{root}}$
        \Repeat \text{\color{red}{ do search}}
            \State $a^* \leftarrow \Changes{\text{UCB1}(Q, P, N)}$
            \State $s \leftarrow \text{next state}(s, a^*)$
        \Until{$N_k(s, a^*) = 0$}
        \State \text{{\color{red}{Evaluate}} the state value} $R(s, a)$ \text{and} $P(s, a)$
        \For{$s$ \text{along the search path}}
            \State $Q_{k+1}(s, a) = \frac{N_k(s, a) \cdot Q_k(s, a) + R(s, a)}{N_k(s, a) + 1}$ \State $N_{k+1}(s, a) = N_k(s, a) + 1$
        \EndFor
        \State \textbf{Return} $Q_{k+1}(s, a), N_{k+1}(s, a)$
    \end{algorithmic}
\end{algorithm}
\end{minipage}
\hfill
\begin{minipage}{0.49\textwidth}
\begin{algorithm}[H]
    \centering
    \caption{Iteration of MCTS with \textbf{Virtual Expansion}}\label{alg:virtual_expand}
    \begin{algorithmic}[1]
        \State Current $k$-th iteration step:
        \State {\bfseries Input:} $\mathcal{A}, P, Q_k(s, a), N_k(s, a), \hat{N}_k(s, a)$
        \\
        \If{\text{Not init } $\hat{N}_k(s, a)$} 
        \State \text{Init: } $\hat{N}_k(s, a) \leftarrow N_k(s, a)$
        \EndIf
        \\
        \State $s \leftarrow s_{\text{root}}$
        \State $a^* \leftarrow \Changes{\text{UCB1}(Q, P, \hat{N}})$
        \State $\hat{N}_k(s, a) \leftarrow \hat{N}_k(s, a) + 1$
        \\
        \State \textbf{Return} $\hat{N}_k(s, a)$
    \end{algorithmic}
\end{algorithm}
\end{minipage}

For the termination rule $\Delta_s(k, k/2) < \epsilon$, we assume $\pi_i$ and $\pi_j$ are directly comparable. 
However, they are not directly comparable because the tree is expanded with UCT. 
As the number of visits increases, the upper bound would be tighter, and the latter visits are more focused on the promising parts. Thus earlier visitation distributions (smaller iteration number) can exhibit more exploratory distribution, while latter ones (larger iteration number) are more exploitative on promising parts. 

To compare $\pi_i$ and $\pi_j$ properly, we propose a method called \textbf{virtual expansion} in place of the vanilla expansion. Briefly, it aligns two distributions by virtual UCT expansions until the constant budget $N$. When the tree is expanded at iteration $k$, it has $N-k$ iterations to go. A normal expansion would require evaluating neural network $N-k$ times for a more accurate $Q(s, a)$ estimate for each arm at the root node. Our proposed virtual expansion still expands $N-k$ times according to UCT, but it ignores the $N-k$ neural network evaluations and assumes that each arm's $Q(s, a)$ does not change. We denote the virtual expanded distribution from $\pi_i$ as a \textbf{virtual expanded policy} $\hat{\pi}_i$.
By doing virtual expansions on both $\pi_i$ and $\pi_j$, we will obtain the corresponding virtual expanded policies $\hat{\pi}_i, \hat{\pi}_j$. 
Here we effectively remove the different levels of exploration/exploitation in the two policies.
Then the termination condition becomes the difference of virtual expanded policies. We name the rule as \textbf{VET-Rule} (Virtual Expanded Termination Rule):
\begin{equation}
\hat{\Delta}_s(k, k/2) = \left|\left| \hat{\pi}_k(s) - \hat{\pi}_{k/2}(s) \right|\right| < \epsilon.
\end{equation}

The comparisons between vanilla expansion and virtual expansion are illustrated in \Changes{Algorithm \ref{alg:expand} and \ref{alg:virtual_expand}}. The time-consuming computations are highlighted in Algorithm \ref{alg:expand}. Line 4 to 7 in Algorithm \ref{alg:expand} target at searching with UCT to reach an unvisited state for exploration. Then it evaluates the state and backpropagates along the search path to better estimate $Q$-values. After total $N$ iterations, the visitation distribution of the root node $\pi_N(s)$ is considered as the final policy $\pi(s)$. However, in virtual expansion, listed in Algorithm \ref{alg:virtual_expand}, it only searches one step from the root node. And it selects actions based on the current estimations without changing any properties of the search tree. Furthermore, the virtual visited counts $\hat{N}_k(s, a)$ are changed after virtual visits to balance the exploitation and the exploration issue. After $N - k$ times virtual expansion, the virtual expanded policy becomes $\hat{\pi}_k(s, a) = \hat{N}_k(s, a) / N$ instead of $N_k(s, a) / k$. When $k=N$, further searches after the root have no effects on the final policy. So $\hat{\pi}_N(s, a) = \pi_N(s, a)$.

\subsection{V-MCTS Algorithm}

\begin{wrapfigure}{R}{0.6\textwidth}
\vskip -0.5cm
\begin{minipage}{0.6\textwidth}
\begin{algorithm}[H]
    \centering
    \caption{Virtual MCTS}\label{alg:stop}
    \begin{algorithmic}[1]
        \State \text{Input: budget $N$, state $s$, conservativeness $r$, error $\epsilon$}
        \State \text{Notice: \Changes{$\pi_k(s)$, $\hat{\pi}_k(s)$} are policy distributions.}
        \State \text{\Changes{Notice: $\pi_k(s, a)$, $\hat{\pi}_k(s, a)$ are probabilities for action $a$.}}
        \For{$k \in N$}
            \State \text{Selection with \Changes{UCB1}}
            \State \text{Expansion for the new node}
            \State \text{\color{red}{Evaluation}} \text{ with Neural Networks}
            \State \text{Backpropagation for updating Q and visitations}
            \State $\pi_k(s, a) \leftarrow N_k(s, a)/k$
            \State \text{Virtual expand $N - k$ nodes and update} $\hat{N}(s, a)$ \label{alg:line_5}
            \State $\Changes{\hat{\pi}_k(s, a)} \leftarrow \hat{N}_k(s, a)/N$
            \If {$k \ge rN \land \left|\left| \hat{\pi}_k(s) - \hat{\pi}_{k/2}(s) \right|\right|_{1} < \epsilon$}
            \State $\pi(s) \leftarrow \hat{\pi}_k(s)$
            \State \textbf{Break}
            \EndIf
            \State $\pi(s) \leftarrow \pi_k(s)$
        \EndFor
        \State \textbf{Return} $\pi(s)$
    \end{algorithmic}
\end{algorithm}
\end{minipage}
\end{wrapfigure}

The procedure of MCTS with VET-Rule is listed in Algorithm \ref{alg:stop}. 
We name our method \textbf{Virtual MCTS} (V-MCTS), a variant of MCTS with VET-Rule.
Compared with the original MCTS, lines 8-13 are the pseudo-code for the rule.
In each iteration, we do some calculations with little cost to judge whether the VET-Rule is satisfied. If it is, then the search process is terminated and returns the current virtual expanded policy $\hat{\pi}_k(s)$. Thus, it skips the next $N - k$ model predictions from neural networks in the evaluation stage highlighted in \Changes{line 7}. In this way, we can approximate the oracle distribution $\pi_N$ by $\hat{\pi}_k$ while reducing the budget of $N$ simulations to $k$. Here, $k \ge rN$ and $r$ is a hyperparameter of the minimum budget $rN$. We can reduce the tree size by $1/r$ times at most. 

\subsection{Theoretical Justifications}
Furthermore, we will give some theoretical bounds on the $Q$-values and $\hat{\Delta}_s(k, N)$ of V-MCTS. Before this, we define some notations first: $k$ is the index of the current search iteration, and $N$ is the number of total search iterations. $\mathcal{A}$ is the action set and $|\mathcal{A}| > 1$. 
Each action $a \in \mathcal{A}$ is associated with a value, which is a random variable bounded in the interval $[0, 1]$ with expectation $Q_a$. At step $k$, the empirical mean value over the $k$ trails is $Q_{k}(s, a)$. For simplification, we denote $Q_{k}(s, a)$ as $\Bar{Q}_a^k$. 
We denote the empirical mean value after $N - k$ virtual expansion as $\hat{Q}_a^N$.
Since we only deal with the visitation distribution of the root, we omit the state subscript for the root state. For convenience, we assume that different actions are ordered by their corresponding expected values, which means that $1 \ge Q_1 \ge Q_2 \ge \cdots \ge Q_a \ge \cdots \ge Q_{|\mathcal{A}|} \ge 0$.

\begin{theorem}
\label{theorem:th_2}
Given $r \in (0, 1)$, confidence $\delta \in (0, 1)$, finite action set $\mathcal{A}$.
$\exists N_0 > 0$, $\forall N > N_0, k \ge rN$, let $\epsilon_k = \sqrt{\frac{1}{2k} \ln{\frac{100k^2}{\delta}}}$, after $k$ times vanilla expansion and $N - k$ times virtual expansion, we have (a) \textbf{Value Consistency in Virtual Expansion}: $Pr\{\bigcap_{a \in \mathcal{A}} \left| \hat{Q}_a^N - Q_a \right| < \epsilon_k\}
        > (1 - \frac{e \delta |\mathcal{A}|}{50 r^2 N^2})$; (b) \textbf{Best Action Identification in Virtual Expansion}: $Pr\{ \left| \hat{Q}_*^N - \Bar{Q}_1^N \right| < \epsilon_k + \epsilon_N\}
        > 1 - 2 (\frac{\delta}{50 k^2} \exp{(\frac{1}{1.61 \sqrt{k}})} + \frac{\delta}{50 N^2} \exp{(\frac{1}{N})})$, where $* := \arg\max_{a \in \mathcal{A}} \Bar{Q}_a^k$, $e$ is the Euler's number.
\end{theorem}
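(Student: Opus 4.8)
The plan is to read both parts as Hoeffding concentration statements for the arms at the root and to let the P-UCT visitation structure supply the effective sample sizes. For any arm $a$, $\hat{Q}_a^N$ is a (real-plus-virtual) empirical average, and its deviation from $Q_a$ is controlled by Hoeffding's inequality, $\Pr\{|\hat{Q}_a^N - Q_a| \ge t\} \le 2\exp(-2\,n_a\,t^2)$, where $n_a$ is the effective number of i.i.d.\ draws backing the estimate. The choice $\epsilon_k^2 = \frac{1}{2k}\ln\frac{100k^2}{\delta}$ is tuned so that, when $n_a \ge k$, the tail is at most $\frac{\delta}{50k^2}$, and the hypothesis $k \ge rN$ then turns this into $\frac{\delta}{50 r^2 N^2}$.

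For part (a), I would first record the key structural fact that a virtual expansion is a P-UCT selection (Algorithm~\ref{alg:virtual_expand}) performed with the $Q$-values frozen at their step-$k$ values, so it never corrupts any estimate; combined with a counting argument for how the $N-k$ virtual visits get distributed among the arms (the exploration term $P(s,a)\sqrt{\sum_b N(s,b)}/(1+N(s,a))$ keeps revisiting under-visited arms), I would argue that after $N$ total expansions every arm's estimate rests on at least $\approx k$ effective samples once $N$ is large enough. A union bound over the $|\mathcal{A}|$ arms then gives $\Pr\{\bigcup_{a} |\hat{Q}_a^N - Q_a| \ge \epsilon_k\} \le |\mathcal{A}|\cdot\frac{\delta}{50 r^2 N^2}\cdot(1+o(1))$, where the $(1+o(1))$ slack coming from ``$\approx k$'' rather than ``$\ge k$'' is bounded uniformly over $N > N_0$ by the constant $e$, producing the stated bound.

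For part (b), I would use the triangle inequality $|\hat{Q}_*^N - \bar{Q}_1^N| \le |\hat{Q}_*^N - Q_*| + |Q_* - Q_1| + |Q_1 - \bar{Q}_1^N|$. On the high-probability event from part (a) (at radius $\epsilon_k$), together with the ordering $Q_1 \ge Q_2 \ge \cdots$, the empirically best arm $*$ at step $k$ is the true best arm, so the middle term is $0$ (a tie $Q_1 = Q_2$ is harmless, since we compare values, not indices); the first term is then $|\hat{Q}_*^N - Q_1| < \epsilon_k$ by part (a) applied to arm $1$, and the last term $|\bar{Q}_1^N - Q_1| < \epsilon_N$ is Hoeffding for the nominal $N$-iteration estimate, in which arm $1$ receives essentially all the visits. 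The two correction factors $\exp(1/(1.61\sqrt{k}))$ and $\exp(1/N)$ are precisely the adjustments for the effective counts being $k(1-o(1))$ and $N(1-o(1))$ rather than exactly $k$ and $N$; adding the two corresponding Hoeffding failure probabilities (and the best-arm-misidentification probability, itself of the same order at radius $\epsilon_k$) yields the claimed bound.

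The main obstacle is the counting lemma in part (a): showing that after the $N-k$ virtual expansions every arm --- in particular the clearly suboptimal arms, barely touched during the $k$ vanilla iterations --- carries enough (real-plus-virtual) samples for the $\epsilon_k$-radius Hoeffding estimate to be legitimate, and pinning down the constants (the $e$, the $1.61$) as genuine uniform-in-$N$ bounds on the lower-order slack. This needs a careful analysis of the P-UCT recursion with frozen $Q$-values --- the magnitude of the exploration bonus, its logarithmic factor, and the influence of the prior $P(s,a)$ --- and it is exactly here that the hypothesis ``$\exists N_0,\ \forall N > N_0$'' is consumed. Everything else (Hoeffding, the union bound over arms, the triangle inequality, and the best-action-identification step) is routine.
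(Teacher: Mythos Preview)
Your plan has a genuine gap at the very first step: it misreads what a virtual expansion does. By Algorithm~\ref{alg:virtual_expand}, a virtual step increments only the root counter $\hat{N}_k(s,a)$; it performs no evaluation, draws no fresh reward sample, and never modifies any $Q$-value. Consequently $\hat{Q}_a^N = \bar{Q}_a^k$ identically, and that estimate is backed by exactly $T_a^k$ real draws, with \emph{zero} contribution from the $N-k$ virtual visits. Your proposed counting lemma (that the exploration bonus spreads virtual visits so that every arm ends up with $\approx k$ ``real-plus-virtual'' effective samples) is therefore not merely hard to pin down --- it is the wrong mechanism, and a clearly suboptimal arm may still have $T_a^k$ as small as $1$. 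The paper's route for part~(a) is instead: (i)~record the structural identity $\hat{Q}_a^N = \bar{Q}_a^k$; (ii)~show by a direct P-UCT calculation that for $N$ large enough every arm has $T_a^k \ge 1$ (an unvisited arm carries an exploration bonus of order $\sqrt{k}$, eventually dominating any visited arm's score); (iii)~apply Hoeffding with the individual sample counts $T_a^k \in [1,k]$ and combine over the $|\mathcal{A}|$ arms. The factor $e$ enters via the crude bound $\exp(1/k) \le e$, not as a $k(1-o(1))$ effective-sample slack.

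Part~(b) is closer in spirit: the triangle inequality through $Q_1$ and your remark that one compares values rather than indices both match how the paper argues. But your reading of the constants is again off. The $1.61\sqrt{k}$ does not encode ``effective count $k(1-o(1))$''. The paper invokes a structural result from Rosin's P-UCT analysis that at most $1.61\sqrt{k}/M_*$ distinct arms are pulled in $k$ rounds, whence the empirically best arm $*$ satisfies $T_*^k \ge \frac{M_*}{1.61}\sqrt{k}$; plugging this lower bound into Hoeffding produces the $\exp(M_*/(1.61\sqrt{k}))$ correction, then bounded by $\exp(1/(1.61\sqrt{k}))$ under an explicit assumption that the learned prior gives $M_* \ge 1/|\mathcal{A}|$. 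Likewise $\exp(1/N)$ comes from the Hoeffding tail at the minimal count $T_1^N \ge 1$, not from $N(1-o(1))$. So what part~(b) actually needs is visit-count lower bounds for the two specific arms $*$ (at step $k$) and $1$ (at step $N$), not the uniform $\approx k$ lower bound you are trying to establish.
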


Theorem \ref{theorem:th_2} (a) gives a bound of the distance between the empirical mean values after virtual expansions and the expected values. Noticed that $\lim_{N \rightarrow \infty} \epsilon_{rN} = 0$ and $\lim_{N \rightarrow \infty} \frac{e \delta |\mathcal{A}|}{50 r^2 N^2} = 0$. It tells that, after enough trails, the expected $Q$-values of all actions can be estimated by the corresponding empirical $Q$-values after virtual expansion. Furthermore, when the $Q$-values have converged, the effect of virtual expansion is the same as that of vanilla expansion.
Denote the best empirical action as $*$, and the best expected action is $1$ because $Q_1 \ge Q_a$. 
Theorem \ref{theorem:th_2} (b) notes that the $Q$-value of the best empirical action with virtual expansion is of high probability to be close to the Q-value of the best expected action with vanilla expansion.
Intuitively, 
it tells that whether or not we successfully find the best expected action, the best empirical action has similar effects to the best expected action. And, $N_0$ should be larger than the action space size, otherwise it cannot satisfy the theorem conditions. 
The proof is attached in Appendix.

\begin{theorem}
\label{theorem:th_4}
(\textbf{Error Bound of V-MCTS}): 
Given $r \in (0, 1)$, confidence $\delta \in (0, 1)$, finite action set $\mathcal{A}$. 
Suppose the virtual expanded policy $\hat{\pi}_k$ is generated from Algorithm \ref{alg:stop} (V-MCTS), 
$\exists N_0 > 0$, $\forall N > N_0, k \ge rN$, $\forall \epsilon  \in (0, 1]$,
if $\left|\left| \hat{\pi}_k(s) - \hat{\pi}_{k/2}(s) \right|\right|_{1} < \epsilon$, we have 
$Pr\{ \left|\left| \pi_N(s) - \hat{\pi}_{k}(s) \right|\right|_{1}  < 3 \epsilon \} > 1 - \frac{e \delta |\mathcal{A}|}{50 N^2} (1 + \frac{4}{r^2})$, where $e$ is the Euler's number.
\end{theorem}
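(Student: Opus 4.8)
The plan is to reduce the bound to a triangle inequality anchored at an idealized virtual expansion that uses the true expected values, and then to control each piece with Theorem~\ref{theorem:th_2}(a). First I would use the identity $\pi_N(s)=\hat{\pi}_N(s)$, already observed in Section~\ref{sec:virtual_expansion} (virtual expansion at $k=N$ is vacuous), so that the target becomes a bound on $\|\hat{\pi}_N(s)-\hat{\pi}_k(s)\|_1$. The triangle inequality together with the termination hypothesis gives
\[
\|\hat{\pi}_N(s)-\hat{\pi}_k(s)\|_1 \le \|\hat{\pi}_N(s)-\hat{\pi}_{k/2}(s)\|_1 + \|\hat{\pi}_{k/2}(s)-\hat{\pi}_k(s)\|_1 < \|\hat{\pi}_N(s)-\hat{\pi}_{k/2}(s)\|_1 + \epsilon ,
\]
so it remains to show $\|\hat{\pi}_N(s)-\hat{\pi}_{k/2}(s)\|_1<2\epsilon$ off an event of probability at most $\frac{e\delta|\mathcal{A}|}{50N^2}(1+\frac{4}{r^2})$.

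Next I would introduce, for an index $m$, the idealized policy $\hat{\pi}^{\star}_m$ obtained by running the same $N-m$ virtual P-UCT steps as V-MCTS but with every arm's value frozen at its true mean $Q_a$ instead of the empirical $\bar{Q}^m_a$. Applying Theorem~\ref{theorem:th_2}(a) once at $m=N$ — where the virtual phase is empty and the statement reduces to a Hoeffding bound at $N$, contributing the $\frac{e\delta|\mathcal{A}|}{50N^2}$ term — and once at $m=k/2\ge rN/2$ — contributing the extra factor $4/r^2$ because $(k/2)^2\ge r^2N^2/4$ — controls, off the stated event, the deviation of the post-expansion empirical values from the $Q_a$ by $\epsilon_N$ and $\epsilon_{k/2}$ respectively. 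Feeding these into a Lipschitz-type stability estimate for the deterministic virtual-expansion map (an $O(\epsilon_{k/2})$, resp. $O(\epsilon_N)$, perturbation of the per-arm values perturbs the normalized visit counts by $o(1)$, since $\epsilon_{k/2},\epsilon_N\to0$) yields $\|\hat{\pi}_{k/2}(s)-\hat{\pi}^{\star}_{k/2}(s)\|_1<\epsilon/2$ and $\|\hat{\pi}_N(s)-\hat{\pi}^{\star}_N(s)\|_1<\epsilon/2$ for $N$ large. Finally, $\hat{\pi}^{\star}_{k/2}$ and $\hat{\pi}^{\star}_N$ are P-UCT expansions to the \emph{same} budget $N$ with the \emph{same} frozen values $Q_a$, differing only in a seed configuration of mass at most $k\le N$; I would argue this seed dependence vanishes as $N\to\infty$ — the gaps $Q_1-Q_a$ and the prior $P$ determine a unique limiting visitation $\hat{\pi}^{\star}_\infty$ that every $\hat{\pi}^{\star}_m$ with $m\le N$ approaches — so $\|\hat{\pi}^{\star}_{k/2}(s)-\hat{\pi}^{\star}_N(s)\|_1<\epsilon$. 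Chaining these estimates, $\|\pi_N(s)-\hat{\pi}_k(s)\|_1 < \epsilon/2+\epsilon+\epsilon/2+\epsilon = 3\epsilon$ on the good event.

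The main obstacle is the quantitative control in the last two steps. Unlike plain UCB, P-UCT has no closed-form equilibrium and the prior $P$ couples the arms, so establishing (i) Lipschitz robustness of the $N$-step virtual-expansion map with respect to the frozen values and (ii) asymptotic independence from the seed configuration requires analysing how suboptimal visit counts grow (roughly like $\sqrt{N}/(Q_1-Q_a)$, hence vanishing in proportion) and separately handling (near-)ties $Q_1=Q_2$, where $\hat{\pi}^{\star}_\infty$ is not a point mass. Getting the constants to sum to exactly $3\epsilon$, and — more delicately — ensuring the threshold $N_0$ can be taken to depend only on $r,\delta,|\mathcal{A}|$ rather than on $\epsilon$, is where most of the work lies; a more hypothesis-driven variant, in which the strong assumption $\|\hat{\pi}_k(s)-\hat{\pi}_{k/2}(s)\|_1<\epsilon$ is used directly to bound the $k\to N$ movement of the visitations, may be needed to make $N_0$ truly uniform in $\epsilon$.
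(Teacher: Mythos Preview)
Your approach is essentially the same as the paper's: reduce via $\pi_N=\hat{\pi}_N$ to bounding $\|\hat{\pi}_N-\hat{\pi}_k\|_1$, anchor at an idealized expansion that uses the true means $Q_a$, and invoke Theorem~\ref{theorem:th_2}(a) once at $N$ (giving the $\frac{e\delta|\mathcal{A}|}{50N^2}$ term) and once at $k/2$ (giving the $4/r^2$ factor). The decompositions differ only in bookkeeping: the paper uses a \emph{single} anchor $\pi(s)$ (MCTS for $N$ steps with the expected values $Q_a$) and chains $\|\hat{\pi}_N-\pi\|_1<\epsilon$, $\|\hat{\pi}_k-\pi\|_1\le\|\hat{\pi}_{k/2}-\pi\|_1+\|\hat{\pi}_k-\hat{\pi}_{k/2}\|_1<2\epsilon$, whereas you use two anchors $\hat{\pi}^{\star}_N,\hat{\pi}^{\star}_{k/2}$ and then need a seed-independence step to identify them. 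The single-anchor route is cleaner and avoids your extra step.

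On your ``main obstacle'': the paper does not prove the Lipschitz/stability claim you flag. It introduces an explicit \textbf{Assumption} in the appendix --- that whenever $\bigcap_a|\bar{Q}^k_a-Q_a|<\sigma_\epsilon$ one has $\|\hat{\pi}_k-\pi\|_1<\epsilon$ --- and argues only informally that this holds for small $\sigma_\epsilon$. So your instinct that this is the nontrivial part is correct; the paper simply postulates it. Your more honest attempt to derive it (and your observation that near-ties $Q_1=Q_2$ need separate treatment) goes beyond what the paper does. Likewise, your worry that $N_0$ may end up depending on $\epsilon$ is well founded: in the paper's argument $N_0$ is chosen so that $\epsilon_N<\sigma_\epsilon$, and $\sigma_\epsilon$ depends on $\epsilon$, so the quantifier order in the theorem statement is not actually achieved by the paper's own proof either.
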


Theorem \ref{theorem:th_4} tells that a bound of $\hat{\Delta}_{s}(k, k/2)$ implies a bound of $\hat{\Delta}_s(k, N)$ with high probability. Noticed that $\lim_{N \rightarrow \infty} \frac{e \delta |\mathcal{A}|}{50 N^2} (1 + \frac{4}{r^2}) = 0$. Therefore, the oracle policy $\pi_N(s)$ can be approximated by $\hat{\pi}_k(s)$ after enough trails.
The proof of this theorem is attached in Appendix.

Given the minimum distance $\epsilon$, for easier states, the rule $\hat{\Delta}_s(k, k/2) < \epsilon$ is easier to satisfy. That's because the Q-values of the tree nodes keep in a small range even with more search iterations.
Thus, the virtual expanded policy generated by V-MCTS is close to the oracle policy, and the search loop will be terminated earlier if the state is easier.
In the next section, we do ablations to investigate the effects of the hyper-parameters and show visualizations to verify the adaptive behavior. 

\section{Experiments}
In this section, the goal of the experiments is to prove the effectiveness and efficiency of V-MCTS. First, we compare the performance and the cost between the vanilla MCTS and our method. Specifically, we evaluate the board game Go $9 \times 9$, and a few Atari games. In addition, we do some ablations to examine the virtual expansion's effectiveness and evaluate the sensitiveness of hyper-parameters. 
Finally, we show the adaptive mechanism through visualizations and performance analysis.

\subsection{Setup}
\textbf{Models and Environments} Recently, Ye \textit{et al.} \cite{ye2021mastering} proposed EfficientZero, a variant of MuZero \citep{schrittwieser2020mastering} with three extra components to improve the sample efficiency, which only requires 8 GPUs in training, and thus it is more affordable. Here we choose the board game Go $9 \times 9$ and a few Atari games as our benchmark environments. The game of Go tests how the algorithm performs in a challenging planning problem. And Atari games feature visual complexity.

\textbf{Hyper-parameters} As for the Go $9 \times 9$, we choose Tromp-Taylor rules. The environment of Go is built based on an open-source codebase, GymGo \citep{gymgo}. We evaluate the performance of the agent against GNU Go v3.8 at level 10 \citep{gnugo} for 200 games. We include 100 games as the black player and 100 games as the white one with different seeds. We set the komi to 6.5, as most papers do.
As for the Atari games, we choose 5 games with 100k environment steps. In each setting, we use 3 training seeds and 100 evaluation seeds for each trained model. More details are attached in Appendix.

\textbf{Baselines} We compare our method to EfficientZero with vanilla MCTS, on Go $9 \times 9$ and some Atari games. Moreover, DS-MCTS \citep{lan2020learning} also terminates the MCTS adaptively through trained uncertainty networks. But it requires specific features designed for Go games, and it only works in the evaluation stage. Therefore, we also compare the final performance for Go games with the DS-MCTS.

\subsection{Results on Go}
Figure \ref{fig:performance} illustrates the computation and performance trade-off on Go against the same GnuGo (level 10) agent. The x-axis is the training speed, and the y-axis is the winning rate. Therefore, the curve which lies to the top-left has better performance than the bottom-right in terms of the trade-off. We train the baseline method with constant budgets $N$, which is noted as the blue points. Besides, we also train the V-MCTS with hyperparameters $r=0.2, \epsilon=0.1$. We evaluate the trained model with different $\epsilon$ to display the trade-off between computation and performance, indicated as the red points. And the green points are the GnuGo with different levels. The GnuGo engine provides models of different levels (1-10). Each level is a trade-off between the run time and the strength of the agent. The y-axis is the winning rate against the model of level 10. Here the green curve shows the performance-computation trade-off of the GnuGo engine.

\begin{figure*}[t]
\vspace{-2em}
	\centering
	\subfigure[Evaluations of Performance]{
			\label{fig:performance}
			\centering
			\includegraphics[width=2.6in]{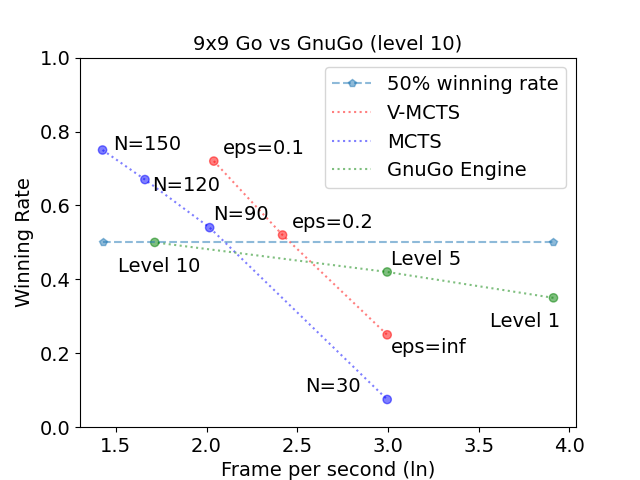}
	}%
	\subfigure[Wining Rates and Tree Size during Training]{ 
		    \label{fig:training}
			\centering
			\includegraphics[width=2.6in]{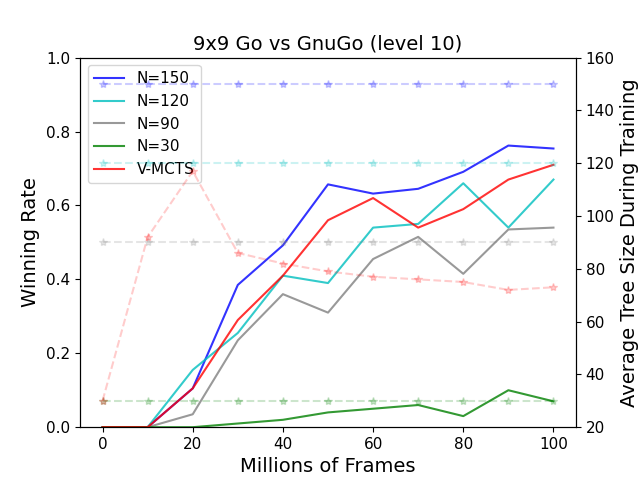}
	}%
	\centering
	\caption{
    Performance of Virtual MCTS on Go $9 \times 9$ against GnuGo (level 10). (a) Evaluating the speed and winning rate of MCTS, V-MCTS, and GnuGo at different levels. 
    V-MCTS has better computation and performance trade-off. X-axis is the frame per second in the ln scale. \Changes{(For convenience, the eps in Figure (a) denotes the hyperparameter $\epsilon$.)} 
    (b) Evaluating the winning rate and the average tree size during the training stage. The \textbf{solid lines} and \textbf{dashed lines} display the winning rate and the tree size, respectively.
    The red one is V-MCTS, and the others are vanilla MCTS with different $N$.
    V-MCTS makes the tree size adaptive in training and reduces the search cost while performing comparably to the vanilla MCTS ($N=150$). However, reducing $N$ in vanilla MCTS results in a more significant performance drop.
    }
	\label{fig:res}
	\vspace{-1em}
\end{figure*}

Firstly, for all the methods, more search iterations (larger $N$ or smaller $eps$) lead to higher winning rates but result in more response time. 
Secondly, V-MCTS \Changes{($\epsilon=0.1$)} achieves \textbf{71\%} winning rate against the GnuGo level 10, which is close to 75\% from MCTS ($N=150$). And the time cost of V-MCTS \Changes{($\epsilon=0.1$)} for a one-step move is \textbf{0.12s} while the GnuGo engine is 0.18s and MCTS ($N=150$) is more than 0.2s. Therefore, such termination rule can keep strong performances with less budget. For a more detailed breakdown of the time consumption of V-MCTS, please see the Appendix. 
Finally, we can find that the red dashed line lies to the right of the blue one. It indicates that V-MCTS is better than the vanilla MCTS considering the computation and performance trade-off.

Figure \ref{fig:training} illustrates the changes of winning rates and the average tree size over the training stage. Here, as the red dashed line shows, the tree size of V-MCTS varies over training and keeps smaller than the maximum size ($N=150$) while the winning rate keeps comparable to the MCTS ($N=150$). Consequently, V-MCTS can work well not only in evaluation but also in training.

\begin{wraptable}{R}{0.7\textwidth}
\caption{Results for Go $9 \times 9$: Comparison of the winning rate and the average budget over 200 games \Changes{for 3 separate training runs}.
}
\label{table:ds_mcts}
\begin{center}
\begin{tabular}{l|l|ll}
\hline
 & MCTS (N=150) & DS-MCTS & V-MCTS \\
\hline
Average budget & 150 $\pm$ 0.0 & 97 $\pm$ 12.5 & \textbf{76} $\pm$ 10.8 \\
Winning rate & \textbf{75\%} $\pm$ 3.0\% & 60\% $\pm$ 4.0\% & 71\% $\pm$ 4.7\% \\
\hline
\end{tabular}
\end{center}
\end{wraptable}

\begin{table*}[b]
\vspace{-1em}
\caption{Results for Atari games: scores over 100 evaluation seeds \Changes{for 3 separate training runs}. $k$ is the average budget of V-MCTS. MCTS ($N = 50$) is the oracle one. The best results among distinct versions except the oracle are in bold. V-MCTS achieves better performance-computation trade-off.}
\label{table:atari}
\small
\begin{center}
\begin{tabular}{ll|lll|l}
\hline
\bf MCTS   & \bf $N = 50$ & \bf $N = 30$ & \bf $N = 10$ & \bf Ours (V-MCTS) & \textbf{Budget} $k$ \\
\hline 
Pong & 19.7 $\pm$ 1.6 & 12.5 $\pm$ 5.5 & 2.0 $\pm$ 1.3 & \textbf{18.8} $\pm$ 2.8 & 13.3 $\pm$ 0.6 \\
Breakout & 410.7 $\pm$ 15.1 & 370.9 $\pm$ 34.1 & 303.9 $\pm$ 11.3 & \textbf{372.8} $\pm$ 18.3 & 15.7 $\pm$ 0.6 \\
Seaquest & 1159.9 $\pm$ 90.7 & 775.2 $\pm$ 146.8 & 555.4 $\pm$ 66.9 & \textbf{970.0} $\pm$ 339.5 & 14.3 $\pm$ 1.2 \\
Hero & 9992.1 $\pm$ 2059.4 & \textbf{9241.3} $\pm$ 3615.3 & 4437.0 $\pm$ 2490.6 & 8928.1 $\pm$ 2922.1 & 15.0 $\pm$ 1.0 \\
Qbert & 14495.8 $\pm$ 683.9 & 10429.9 $\pm$ 2291.1 & 8149.8 $\pm$ 2085.0 & \textbf{11476.6} $\pm$ 978.2 & 16.3 $\pm$ 1.2 \\
\hline
\end{tabular}
\end{center}
\end{table*}


We also compare our method to DS-MCTS~\citep{lan2020learning}, which terminates the search when the state is predicted to be certain with DNNs. To make fair comparisons, we implement the DS-MCTS and follow their design of features for Go games. We set $N_{max}=150, c=\{30, 75, 120\}, thr=\{.1, .1, .1\}$ in DS-MCTS.
Then we compare the winning rate and the average budget among the vanilla MCTS, DS-MCTS, and V-MCTS. Experiments show that V-MCTS outperforms the DS-MCTS in both aspects, listed in Table \ref{table:ds_mcts}. We attribute the better performance of V-MCTS to the virtual expanded policy. DS-MCTS chooses $\pi_k(s)$ as the policy after the termination of the search, while V-MCTS chooses $\hat{\pi}_k(s)$, which has theoretical guarantees to approximate $\pi_N(s)$.

\subsection{Results on Atari}

Apart from the results of Go, we also evaluate our method on some visually complex games. Since the search space of Atari games is much smaller than that of Go and the Atari games are easier, we choose a few Atari games to study how the proposed method impacts the performance. 
We follow the setting of EffcientZero, 100k Atari benchmark, which contains only 400k frames data. The results are shown in Table \ref{table:atari}. Generally, we find that our method works on Atari games. The tree size is adaptive, and the performance of V-MCTS is still comparable to the MCTS with full search trails. It has better performance than the MCTS($N=30$) while requiring fewer searches, proving the effectiveness and efficiency of our proposed method. The Hero game is an outlier here. But our performance is very close to the vanilla MCTS ($N=30$) while we use half of the search iterations on average.
Besides, the number of search times decreases more than that on Go. 

To sum up, V-MCTS can keep comparable performance under fewer search iterations while simply reducing the total budget of MCTS will encounter a more significant performance drop. In addition, the savings of search cost is more substantial in easier environments. 

\subsection{Ablation Study}

The results in the previous section suggest that our method reduces the response time of MCTS while keeping comparable performance on challenging tasks. This section tries to figure out which component contributes to the performance and how the hyperparameters affect it. And we also ablate the effects of different normalization criterions in VET-Rule and the larger budget ($N$) in MCTS.

\textbf{Virtual Expansion} In Section \ref{sec:virtual_expansion}, we introduce the virtual expansion.
To prove the effectiveness of virtual expansion, we compare it with another two baseline expansion methods. One is the vanilla expansion, mentioned in Algorithm \ref{alg:expand}, which returns at iteration $k$ and outputs $\pi_k$.
Another is greedy expansion, which spends the left $N - k$ simulations in searching the current best action greedily, indicating that $\hat{\pi}_k(s, a) = (N_k(s, a) + (N - k) \mathbf{1}_{b=\arg\max N_k(s, b)}) / N$.
Briefly, we stop the search process after $k=30$ iterations and do $N-k$ times virtual expansion or greedy expansion or nothing, where $k = rN$ and $r=0.2, N=150$.

\begin{wraptable}{R}{0.5\textwidth}
\caption{Ablation results of different expansion methods on Go $9 \times 9$ \Changes{for 3 separate training runs}.}
\label{ablation}
\begin{center}
\begin{tabular}{lll}
\hline
\bf Algorithm  & \bf Size Avg. & \bf Winning Rate \\
\hline 
Vanilla expansion & 30 & 17\% $\pm$ 3.2\% \\
Greedy expansion & 30 & 3\% $\pm$ 2.0\% \\
Virtual expansion & 30 & \textbf{32\%} $\pm$ 3.5\% \\
\hline
\end{tabular}
\end{center}
\end{wraptable}

\begin{figure*}[t]
\vspace{-2em}
	\centering
	\subfigure[Evaluations of Performance]{
			\label{fig:abl_r}
			\centering
			\includegraphics[width=2.6in]{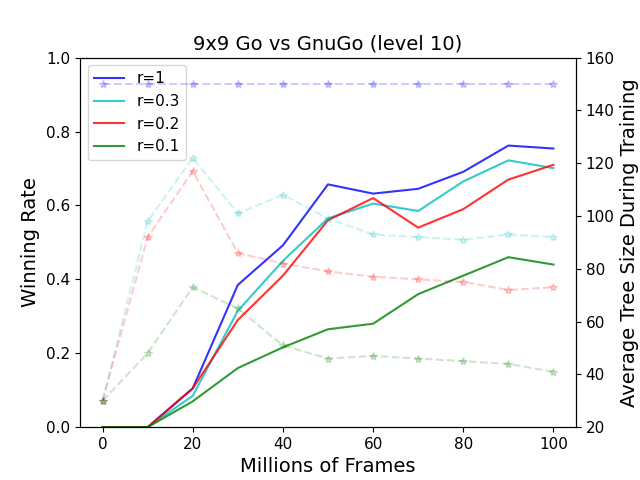}
	}%
	\subfigure[Wining Rates and Tree Size during Training Stage]{ 
		    \label{fig:abl_eps}
			\centering
			\includegraphics[width=2.6in]{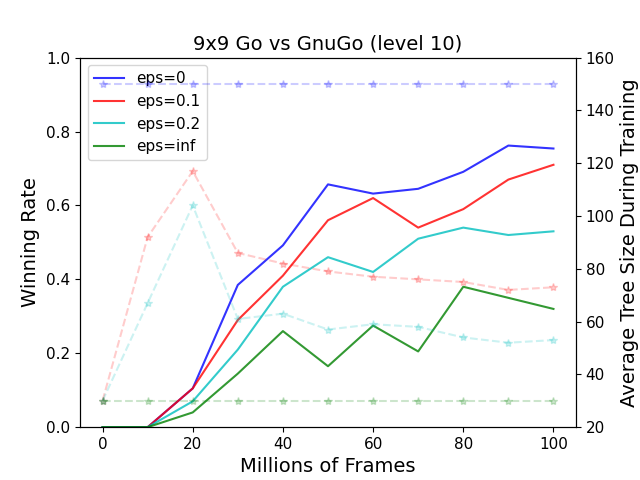}
	}%
	\centering
	\caption{
    Sensitivity of VET-Rule to the hyperparameter $r, \epsilon$ on Go $9 \times 9$. The solid lines and dashed lines display the winning probability and the average tree size, respectively.
    }
	\label{fig:ablation}
\end{figure*}

We compare the winning rate against the same engine, and the results are listed as Table \ref{ablation} shows. 
The winning rate of virtual expansion can achieve 32\%, which is much better than the others. Besides, MCTS with greedy expansion does not work because it over-exploits and results in severe exploration issues. Consequently, virtual expansion can generate a better policy distribution because it can balance exploration and exploitation with UCT.

\textbf{Termination Rule}
It is significant to explore a better termination rule to keep the sound performance while decreasing the tree size as much as possible. As mentioned in Section \ref{sec:termination}, VET-Rule has two hyperparameters $r, \epsilon$. Here $r$ is the factor of the minimum budget $rN$, and $\epsilon$ is the minimum distance $\hat{\Delta}_s(k, k/2)$.
To explore the VET-Rule with better computation and performance trade-off, we do ablations for the different values of $r$ and $\epsilon$, respectively. The default values of $r, \epsilon$ are set to $0.2, 0.1$.

Figure \ref{fig:ablation} compares the winning rate as well as the average tree size across the training stage.
Firstly, Figure \ref{fig:abl_r} gives the results of different minimum search times factor $r$.
The winning probability is not sensitive to $r$ when $r \ge 0.2$.
Nevertheless, the average tree size is sensitive to $r$ because V-MCTS is supposed to search for at least $rN$ times.
In addition, there is a performance drop between $r=0.1$ and $r=0.2$. Therefore, it is reasonable to choose $r=0.2$ to balance the speed and the performance.

Besides, the comparisons of the different minimum distance $\epsilon$ are shown in Figure \ref{fig:abl_eps}. A larger $\epsilon$ makes the tree size smaller because $\hat{\Delta}_s(k, k/2) < \epsilon$ is easier to satisfy. In practice, the performance is highly correlated with $\epsilon$. In terms of the winning rate, a smaller $\epsilon$ outperforms a larger one. However, better performances are at the cost of more computations. 
We suggest selecting an appropriate minimum distance to balance the computation and performance ($r=0.2, \epsilon=0.1$).

\Changes{\textbf{Normalization criterion in VET-Rule} The proposed VET-Rule, $\left|\left| \hat{\pi}_k(s) - \hat{\pi}_{k/2}(s) \right|\right| < \epsilon$ is a termination condition for V-MCTS. And L2 norm is another reasonable choice to amplify the bigger deviations. Therefore, we make ablations of the normalization criterion for the policy distributions. Specifically, we take a pretrained model, and compare the different strategies of L1 norm and L2 norm, namely, $\left|\left| \hat{\pi}_k(s) - \hat{\pi}_{k/2}(s) \right|\right|_{1} < \epsilon$ and $\left|\left| \hat{\pi}_k(s) - \hat{\pi}_{k/2}(s) \right|\right|_{2} < \epsilon$. The results are as Tab. \ref{table:vet_rule} shows. We can find that (1) L2 norm can also work for V-MCTS; (2) L1 norm is better than L2 norm. And we attribute this to the formulation of ucb scores. Because the ucb scores have already taken into account the difference in the visitations (see the N(s, a) in Eq (1)). Therefore, amplifying the deviations may result in some bias. }

\begin{table}[!h]
\caption{\Changes{Comparison of the winning rate and the average budget with different norm strategies in VET-Rule. L1 Norm means $\left|\left| \hat{\pi}_k(s) - \hat{\pi}_{k/2}(s) \right|\right|_{1} < \epsilon$ and L2 Norm means $\left|\left| \hat{\pi}_k(s) - \hat{\pi}_{k/2}(s) \right|\right|_{2} < \epsilon$.}}
\label{table:vet_rule}
\begin{center}
\begin{tabular}{l|ll}
\hline
\bf  & Average budget & Winning rate \\
\hline  
MCTS ($N=150$) & $150$ & $82.0\%$ \\
\hline
V-MCTS \textbf{L1} Norm, $N=150, r=0.2, \epsilon=0.1$ & $\textbf{96.2}$ & $\textbf{81.5\%}$ \\
V-MCTS \textbf{L2} Norm, $N=150, r=0.2, \epsilon=0.1$ & $97.1$ & $79.8\%$ \\
V-MCTS \textbf{L2} Norm, $N=150, r=0.2, \epsilon=0.05$ & $119.3$ & $81.0\%$ \\
\hline
\end{tabular}
\end{center}
\end{table}

\Changes{\textbf{Larger budget ($N$) in MCTS} To investigate whether our method still holds with larger amounts of MCTS expansions, we take a pretrained model and compare two strategies: (1) vanilla expansion with N=150/400/600/800 nodes in MCTS (2) virtual expanded policy with $N=800, r=0.2, \epsilon=0.1$. The results are listed in Tab. \ref{table:more_budget}. The result shows that (1) V-MCTS($N=800, r=0.2, \epsilon=0.1$) is better than MCTS ($N=600$) in both the average budget and the winning rate, (2) V-MCTS can achieve comparable performance to the oracle MCTS($N=800$) while keeping much less average budget. Therefore, V-MCTS works with a larger amount of MCTS expansions.}

\begin{table}[!h]
\caption{\Changes{Comparison of the winning rate and the average budget with larger amounts of MCTS expansions. Here the hyper-parameters of our method are $N=800, r=0.2, \epsilon=0.1$.}}
\label{table:more_budget}
\begin{center}
\begin{tabular}{l|llll|l}
\hline
\bf MCTS & $N=150$ & $N=400$ & $N=600$ & $N=800$ & Ours \\
\hline  
Average budget & $150$ & $400$ & $600$ & $800$ & $431.1$ \\
Winning rate & $82.0\%$ & $84.5\%$ & $84.9\%$ & $85.9\%$ & $85.0\%$ \\
\hline
\end{tabular}
\end{center}
\end{table}

\subsection{Visualization of V-MCTS's Adaptive Behavior}
\label{sec:vis}

\begin{figure*}[t]
\vspace{-2em}
	\centering
	\subfigure[Play as Black]{
			\label{fig:abl_r}
			\centering
			\includegraphics[width=13cm]{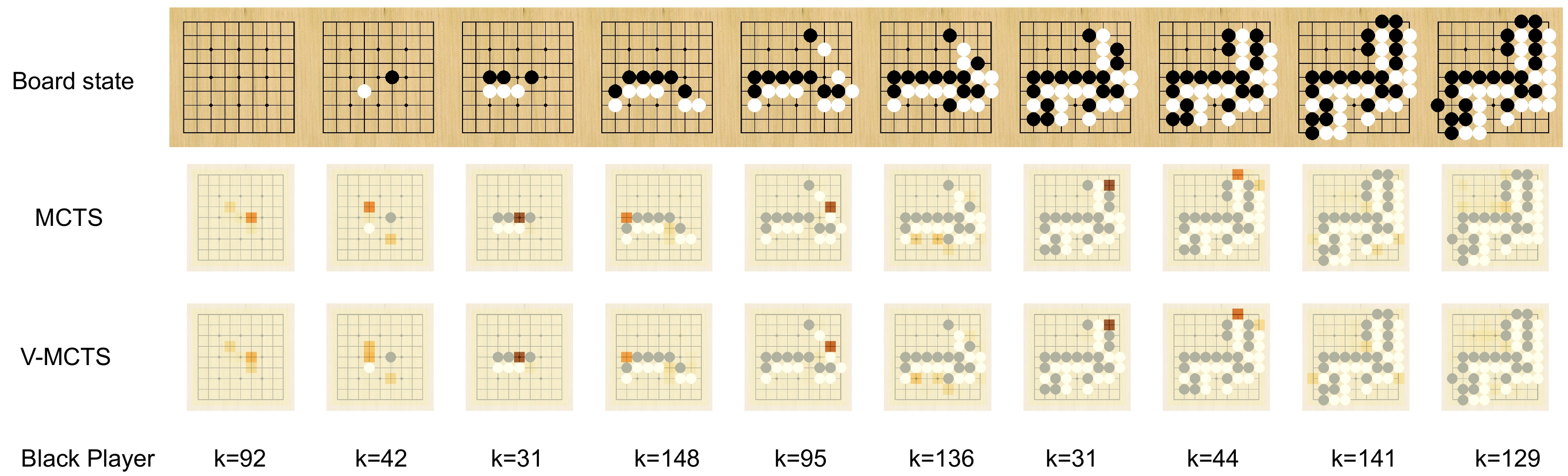}
	}%
	\\
	\subfigure[Play as White]{ 
		    \label{fig:abl_eps}
			\centering
			\includegraphics[width=13cm]{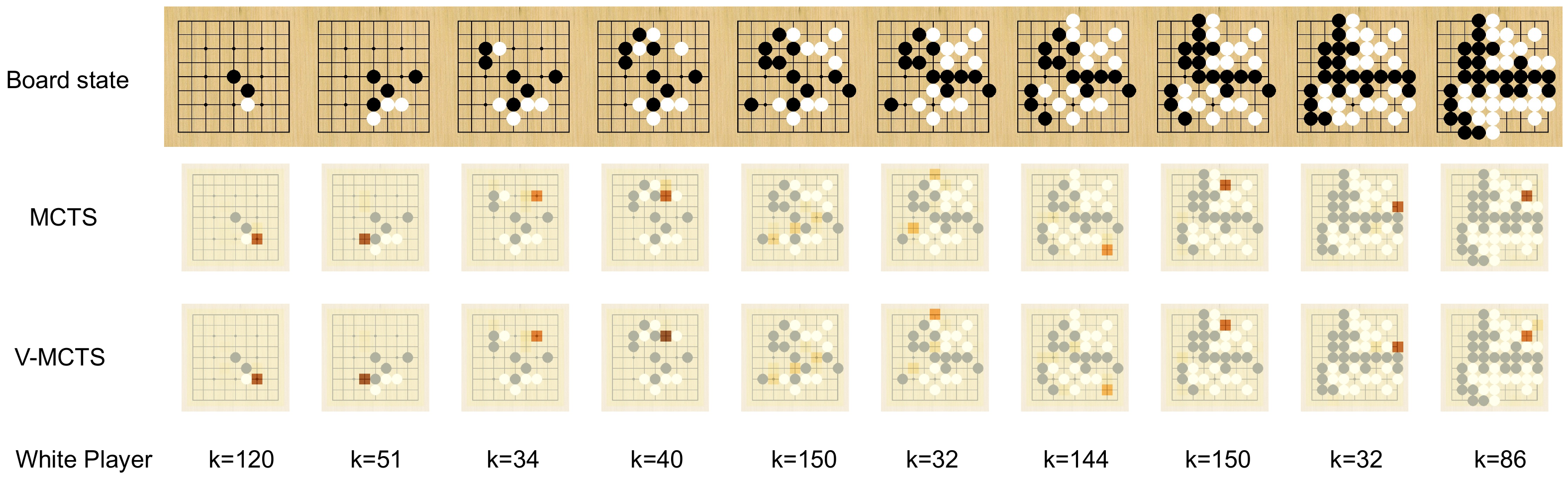}
	}%
	\centering
	\caption{
	\Changes{
	Heatmap of policy distributions from the MCTS ($N=150$) and the V-MCTS. The agent play as Black in (a) and White in (b) against the GnuGo (level 10). Our agent wins in both of the games. A darker red color represents larger visitations of the corresponding action. The V-MCTS will terminate with different search times $k$ according to the situations and generate a near-oracle policy distribution.
	}
    }
	\label{fig:heatmap}
	\vspace{-1.5em}
\end{figure*}

In this section, we display the adaptive mechanism through visualizations and performance analysis. We find that (1) $\hat{\pi}_k(s)$ is close to $\pi_N(s)$; (2) V-MCTS terminates earlier for simpler states.

Specifically, we choose some states at different time steps on one game of Go against the GnuGo with a trained model. And Figure \ref{fig:heatmap} is the visualization of the policy distributions heatmap.
Here we add two games, which contain one black player and one white player. The last two rows in each subfigure are the heatmap visualization for oracle MCTS ($\pi_N$) and V-MCTS ($\hat{\pi}_k$ when $\hat{\Delta}_s(k, k/2) < \epsilon$). The darker the color is on the grid, the more the corresponding action is visited during the search. In general, $\hat{\pi}_k$ is close to the $\pi_N$ at distinct states, indicating that the virtual expanded policy obtained after virtual expansion is close to the oracle one. 

Furthermore, the less valuable actions there are, the sooner the V-MCTS will terminate. For example, on Go games, the start states are usually not complex because there are only a few stones on the board, but the situations are more complicated in shuban, the closing stage of the game. Notably, the termination occurs earlier in the start states (columns 1, 2, 3), but it is the opposite when the situation is more complicated.
More importantly, the termination step $k$ is not related to the number of Go pieces. 
Therefore, we can conclude that V-MCTS makes adaptive terminations according to the situations of the current states and generate near-oracle policies. Specifically, it terminates the search loop earlier when handling easier states, which has a better computation and performance trade-off. 

\section{Discussion}
This paper proposes a novel method named V-MCTS to accelerate the MCTS to determine the termination of search iterations. It can maintain comparable performances while reducing half of the time to search adaptively. We believe that this work can be one step toward applying the MCTS-based methods to some real-time domains. One limitation of our work is that it cannot deal with the environments of continuous action space. In the future, we will plan to extend to the continuous action space with early termination.

\begin{ack}
This work is supported by the Ministry of Science and Technology of the People´s Republic of China, the 2030 Innovation Megaprojects "Program on New Generation Artificial Intelligence" (Grant No. 2021AAA0150000). This work is also supported by a grant from the Guoqiang Institute, Tsinghua University.
\end{ack}

\bibliographystyle{abbrv}
\bibliography{main}

\begin{thebibliography}{10}

\bibitem{baier2014mcts}
H.~Baier and M.~H. Winands.
\newblock Mcts-minimax hybrids.
\newblock {\em IEEE Transactions on Computational Intelligence and AI in
  Games}, 7(2):167--179, 2014.

\bibitem{baier2018mcts}
H.~Baier and M.~H. Winands.
\newblock Mcts-minimax hybrids with state evaluations.
\newblock {\em Journal of Artificial Intelligence Research}, 62:193--231, 2018.

\bibitem{bouzy2001computer}
B.~Bouzy and T.~Cazenave.
\newblock Computer go: an ai oriented survey.
\newblock {\em Artificial Intelligence}, 132(1):39--103, 2001.

\bibitem{bouzy2004monte}
B.~Bouzy and B.~Helmstetter.
\newblock Monte-carlo go developments.
\newblock In {\em Advances in computer games}, pages 159--174. Springer, 2004.

\bibitem{gnugo}
D.~Bump, M.~Li, W.~Iba, and et~al.
\newblock Gnugo, 2005.

\bibitem{cai2007computer}
X.~Cai and D.~C. Wunsch.
\newblock Computer go: A grand challenge to ai.
\newblock {\em Challenges for Computational Intelligence}, pages 443--465,
  2007.

\bibitem{carnap1962logical}
R.~Carnap.
\newblock Logical foundations of probability.
\newblock 1962.

\bibitem{chaslot2008monte}
G.~Chaslot, S.~Bakkes, I.~Szita, and P.~Spronck.
\newblock Monte-carlo tree search: A new framework for game ai.
\newblock {\em AIIDE}, 8:216--217, 2008.

\bibitem{coulom2006efficient}
R.~Coulom.
\newblock Efficient selectivity and backup operators in monte-carlo tree
  search.
\newblock In {\em International conference on computers and games}, pages
  72--83. Springer, 2006.

\bibitem{danihelka2022policy}
I.~Danihelka, A.~Guez, J.~Schrittwieser, and D.~Silver.
\newblock Policy improvement by planning with gumbel.
\newblock In {\em International Conference on Learning Representations}, 2022.

\bibitem{gelly2007combining}
S.~Gelly and D.~Silver.
\newblock Combining online and offline knowledge in uct.
\newblock In {\em Proceedings of the 24th international conference on Machine
  learning}, pages 273--280, 2007.

\bibitem{gelly2008achieving}
S.~Gelly and D.~Silver.
\newblock Achieving master level play in 9 x 9 computer go.
\newblock In {\em AAAI}, volume~8, pages 1537--1540, 2008.

\bibitem{gelly2011monte}
S.~Gelly and D.~Silver.
\newblock Monte-carlo tree search and rapid action value estimation in computer
  go.
\newblock {\em Artificial Intelligence}, 175(11):1856--1875, 2011.

\bibitem{gelly2006modification}
S.~Gelly, Y.~Wang, R.~Munos, and O.~Teytaud.
\newblock {\em Modification of UCT with patterns in Monte-Carlo Go}.
\newblock PhD thesis, INRIA, 2006.

\bibitem{grill2020monte}
J.-B. Grill, F.~Altch{\'e}, Y.~Tang, T.~Hubert, M.~Valko, I.~Antonoglou, and
  R.~Munos.
\newblock Monte-carlo tree search as regularized policy optimization.
\newblock In {\em International Conference on Machine Learning}, pages
  3769--3778. PMLR, 2020.

\bibitem{haarnoja2018soft}
T.~Haarnoja, A.~Zhou, P.~Abbeel, and S.~Levine.
\newblock Soft actor-critic: Off-policy maximum entropy deep reinforcement
  learning with a stochastic actor.
\newblock In {\em International conference on machine learning}, pages
  1861--1870. PMLR, 2018.

\bibitem{hoffman2013stochastic}
M.~D. Hoffman, D.~M. Blei, C.~Wang, and J.~Paisley.
\newblock Stochastic variational inference.
\newblock {\em Journal of Machine Learning Research}, 14(5), 2013.

\bibitem{hsueh2016analysis}
C.-H. Hsueh, I.-C. Wu, W.-J. Tseng, S.-J. Yen, and J.-C. Chen.
\newblock An analysis for strength improvement of an mcts-based program playing
  chinese dark chess.
\newblock {\em Theoretical Computer Science}, 644:63--75, 2016.

\bibitem{gymgo}
E.~Huang.
\newblock Gymgo.
\newblock \url{https://github.com/aigagror/GymGo}, 2021.

\bibitem{kocsis2006bandit}
L.~Kocsis and C.~Szepesv{\'a}ri.
\newblock Bandit based monte-carlo planning.
\newblock In {\em European conference on machine learning}, pages 282--293.
  Springer, 2006.

\bibitem{lan2020learning}
L.-C. Lan, M.-Y. Tsai, T.-R. Wu, I.~Wu, C.-J. Hsieh, et~al.
\newblock Learning to stop: Dynamic simulation monte-carlo tree search.
\newblock {\em arXiv preprint arXiv:2012.07910}, 2020.

\bibitem{lorentz2015early}
R.~Lorentz.
\newblock Early playout termination in mcts.
\newblock In {\em Advances in Computer Games}, pages 12--19. Springer, 2015.

\bibitem{mnih2015human}
V.~Mnih, K.~Kavukcuoglu, D.~Silver, A.~A. Rusu, J.~Veness, M.~G. Bellemare,
  A.~Graves, M.~Riedmiller, A.~K. Fidjeland, G.~Ostrovski, et~al.
\newblock Human-level control through deep reinforcement learning.
\newblock {\em Nature}, 518(7540):529--533, 2015.

\bibitem{newell1982knowledge}
A.~Newell.
\newblock The knowledge level.
\newblock {\em Artificial intelligence}, 18(1):87--127, 1982.

\bibitem{rosin2011multi}
C.~D. Rosin.
\newblock Multi-armed bandits with episode context.
\newblock {\em Annals of Mathematics and Artificial Intelligence},
  61(3):203--230, 2011.

\bibitem{russell1994provably}
S.~J. Russell and D.~Subramanian.
\newblock Provably bounded-optimal agents.
\newblock {\em Journal of Artificial Intelligence Research}, 2:575--609, 1994.

\bibitem{schrittwieser2020mastering}
J.~Schrittwieser, I.~Antonoglou, T.~Hubert, K.~Simonyan, L.~Sifre, S.~Schmitt,
  A.~Guez, E.~Lockhart, D.~Hassabis, T.~Graepel, et~al.
\newblock Mastering atari, go, chess and shogi by planning with a learned
  model.
\newblock {\em Nature}, 588(7839):604--609, 2020.

\bibitem{schulman2017proximal}
J.~Schulman, F.~Wolski, P.~Dhariwal, A.~Radford, and O.~Klimov.
\newblock Proximal policy optimization algorithms.
\newblock {\em arXiv preprint arXiv:1707.06347}, 2017.

\bibitem{sephton2014heuristic}
N.~Sephton, P.~I. Cowling, E.~Powley, and N.~H. Slaven.
\newblock Heuristic move pruning in monte carlo tree search for the strategic
  card game lords of war.
\newblock In {\em 2014 IEEE Conference on Computational Intelligence and
  Games}, pages 1--7. IEEE, 2014.

\bibitem{silver2016mastering}
D.~Silver, A.~Huang, C.~J. Maddison, A.~Guez, L.~Sifre, G.~Van Den~Driessche,
  J.~Schrittwieser, I.~Antonoglou, V.~Panneershelvam, M.~Lanctot, et~al.
\newblock Mastering the game of go with deep neural networks and tree search.
\newblock {\em Nature}, 529(7587):484--489, 2016.

\bibitem{silver2018general}
D.~Silver, T.~Hubert, J.~Schrittwieser, I.~Antonoglou, M.~Lai, A.~Guez,
  M.~Lanctot, L.~Sifre, D.~Kumaran, T.~Graepel, et~al.
\newblock A general reinforcement learning algorithm that masters chess, shogi,
  and go through self-play.
\newblock {\em Science}, 362(6419):1140--1144, 2018.

\bibitem{silver2017mastering}
D.~Silver, J.~Schrittwieser, K.~Simonyan, I.~Antonoglou, A.~Huang, A.~Guez,
  T.~Hubert, L.~Baker, M.~Lai, A.~Bolton, et~al.
\newblock Mastering the game of go without human knowledge.
\newblock {\em Nature}, 550(7676):354--359, 2017.

\bibitem{wang2007modifications}
Y.~Wang and S.~Gelly.
\newblock Modifications of uct and sequence-like simulations for monte-carlo
  go.
\newblock In {\em 2007 IEEE Symposium on Computational Intelligence and Games},
  pages 175--182. IEEE, 2007.

\bibitem{ye2021mastering}
W.~Ye, S.~Liu, T.~Kurutach, P.~Abbeel, and Y.~Gao.
\newblock Mastering atari games with limited data.
\newblock {\em Advances in Neural Information Processing Systems},
  34:25476--25488, 2021.

\end{thebibliography}

\newpage

\appendix
\section{Appendix}

\subsection{Experimental setup}
\label{app:setting}

\subsubsection{Models and Hyper-parameters}

\textbf{MCTS in modern RL algorithms} As mentioned in the related works, modern MCTS-based RL algorithms include four stages in the search loop, namely selection, expansion, evaluation, and backpropagation. 
(1) The selection stage targets selecting a new leaf node with UCT. (2) The expansion stage expands the selected node and updates the search tree. (3) The evaluation stage evaluates the value of the new node. (4) The backpropagation stage propagates the newly computed value to the nodes along the search path to obtain more accurate Q-values with Bellman backup. 

\textbf{Model Design}
As for the architecture of the networks, we follow the implementation of EfficientZero \citep{ye2021mastering} in Atari games, which proposes three components based on MuZero: self-supervised consistency, value prefix, and off-policy correction.
In the implementation of EfficientZero, there is a representation network, a dynamics network, and a reward/value/policy prediction network. The representation network is to encode observations to hidden states. The dynamics network is to predict the next hidden state given the current hidden state and an action. The reward/value/policy prediction network is to predict the reward/value/policy. Notably, they propose to keep temporal consistency between $s_{t+1}$ and the predicted state $\hat{s}_{t+1}$. The training objective is:
\begin{equation}
    \begin{aligned}
        \mathcal{L}_t(\theta) &= \lambda_1 \mathcal{L}(u_t, r_t) + \lambda_2 \mathcal{L}(\pi_t, p_t) + \lambda_3 \mathcal{L}(z_t, v_t)  + \lambda_4 \mathcal{L}_{\text{similarity}}(s_{t+1}, \hat{s}_{t+1}) + c ||\theta||^2 \\
        \mathcal{L}(\theta) &= \frac{1}{l_{\text{unroll}}} \sum_{i=0}^{l_{\text{unroll}}-1} \mathcal{L}_{t+i}(\theta),
    \end{aligned}
\end{equation}
where $u_t, \pi_t, z_t$ are the target reward/policy/value of the state $s_t$ and $r_t, p_t, v_t$ are the predicted reward/policy/value of the state $s_t$ respectively. The prediction will do $l_{\text{unroll}}=5$ times iteratively for the state $s$ on both Go and Atari games. We do some changes when dealing with board games. Significantly, we remove the reward prediction network because the agent will receive a reward only at the end of the games. The other major changes for board games are listed as follows.

Since the board game Go is harder than the Atari games, we add more residual blocks (two times blocks). Specifically, we use 2 residual blocks in the representation network, the dynamics network, as well as the value/policy prediction network on Go $9 \times 9$ while EfficientZero uses only 1 residual block in those networks on Atari games.
As for the representation network, we remove the downsampling part here because there is no need to do downsampling for Go states.
In the value/policy prediction networks, we enlarge the dimension of the hidden layer from 32 to 128.
Besides, considering that the reward is sparse on Go (only the final value) and the collected data are sufficient, we only take the self-supervised consistency component in EfficientZero to give more temporal supervision during training.

\textbf{Hyper-parameters}
In each case, we train EfficientZero for unrolled 5 steps and mini-batches of size 256. Besides, the model is trained for 100k batches with 100M frames of data on board games while 100k batches with 400k frames in Atari games. We stack 8 frames in board games without frameskip while stacking 4 frames in Atari games with a frameskip of 4. During both training and evaluation, EfficientZero chooses 150 simulations for each search in board games while 50 simulations of budget for Atari games. Other hyper-parameters are listed in Table \ref{tab:param}.

\begin{table}[!h]
    \centering
    \begin{tabular}{c|c|c}
    \hline
         & \bf Go $9 \times 9$ & \bf Atari \\
    \hline
        Maximum number of tree size & 150 & 50 \\
        Observation down-sampling & No & 96 $\times$ 96 \\
        Total frames & 100M & 400k \\
        Replay buffer size & 2M & 100k \\
        Max frames per episode & 163 & 108k \\
        Cost of training time & 24h & 8h \\
        Komi of Go & 6.5 & - \\
        Frame stack & 8 & 4 \\
        Frame skip & 1 & 4 \\
        Training steps & 100k & 100k \\
        Mini batch & 256 & 256 \\
        Learning rate & 0.05 & 0.2 \\
        Weight decay ($c$) & 0.0001 & 0.0001 \\
        Reward loss coefficient ($\lambda_1$) & 0 & 1 \\
        Policy loss coefficient ($\lambda_2$) & 1 & 1 \\
        Value loss coefficient ($\lambda_3$) & 1 & 0.25 \\
        Consistency loss coefficient ($\lambda_4$) & 2 & 0.5 \\
        Dirichlet $\alpha$ & 0.03 & 0.3 \\
        $c_1$ in P-UCT & 1.25 & 1.25 \\
        $c_2$ in P-UCT & 19652 & 19652 \\
        $\epsilon$ & 0.1 & 0.1 \\
        $r$ & 0.2 & 0.2 \\
    \hline
    \end{tabular}
    \caption{Hyper-parameters of V-MCTS on Go $9 \times 9$ and Atari games}
    \label{tab:param}
\end{table}

\subsubsection{Training Details of Go}
\label{app:go}
The detailed implementations of Atari games are discussed in EfficientZero \citep{ye2021mastering}. However, it is nontrivial to adapt to board games. Here we give detailed instructions for training board games Go $9 \times 9$ in our implementations.

\textbf{Inputs} We follow the designs of AlphaZero, and we use the Tromp-Taylor rules, which is similar to previous work \citep{silver2018general, schrittwieser2020mastering}. The input states of the Go board are encoded into a $17 \times 9 \times 9$ array, which stacks the historical 8 frames and uses the last channels $C$ to identify the current player, 0 for black and 1 for white. Notably, the one historical frame consists of two planes $[X, Y]$, where the first plane $X$ represents the stones of the current player and the second one $[Y]$ represents the stones of the opponent. Besides, if there is a stone on board, then the state of the corresponding position in the frame will be set to 1, otherwise to 0. 
For example, if the current player is black and suppose $b[i, j]$ is the current board state, $X[i,j]=\mathbf{1}_{b[i, j] = \text{black stone}}, Y[i,j]=\mathbf{1}_{b[i, j] = \text{white stone}}$.
In summary, we concatenate together the historical planes to generate the input state $s=[X_{t-7}, Y_{t-7}, X_{t-6}, X_{t-6}, ..., X_t, Y_t, C]$, where $X_t, Y_t$ are the feature planes at time step $t$ and $C$ gives information of the current player.

\textbf{Training}
As for the training phase, we train the model from scratch without any human expert data, which is the same as the setting of Atari games. Besides, limited to the GPU resources, we do not use the reanalyzing mechanism of MuZero \citep{schrittwieser2020mastering} and EfficientZero \citep{ye2021mastering}, which targets at recalculation of the target values and policies from trajectories in the replay buffer with the current fresher model. Specifically, we use 6 GPUs for doing self-play to collect data, 1 GPU for training, and 1 GPU for evaluation.

\textbf{Exploration} 
To make a better exploration on Go, we reduce the $\alpha$ in the Dirichlet noise Dir$(\alpha)$ from 0.3 to 0.03, and we scale the exploration noise through the typical number of legal actions, which follows these works \citep{silver2018general, schrittwieser2020mastering}.
In terms of sampling actions from MCTS visit distributions, we will mask the MCTS visit distributions with the legal actions and sample an action $a_t$, where
\begin{equation}
    a_t :=
    \left\{
    \begin{aligned}
        a_t &\sim \pi_t, &t < T \\
        a_t &= \arg\max \pi_t, &t \ge T
    \end{aligned}
    \right. 
\end{equation}
$T$ is set to $16$ in self-play and is set to $0$ in evaluation, which is similar to these works \citep{silver2018general, schrittwieser2020mastering}. In this way, the agent does more explorations for the previous $T$ steps while taking the best action afterwards. But for Atari games, $\forall t$, we choose $a_t \sim \pi_t$ in self-play and $a_t = \arg\max \pi_t$ in evaluation, which is the same as these works \citep{schrittwieser2020mastering, ye2021mastering}.

\textbf{Two-player MCTS} 
On board games, there are two players against each other, which is different from that of one-player games. Therefore, we should do some changes to the MCTS with the two-player game. For one thing, the value network always predicts the Q-value of the black player instead of the current player, which provides a more stable prediction. Furthermore, A significant change is that during backpropagation of MCTS, the value should be updated with the negative value from the child node. Because the child node is the opponent, the higher value of the opponent indicates a worse value of the current player. 
Besides, as for $Q$-values of the unvisited children on Go and Atari games, we follow the implementation of EfficientZero \citep{ye2021mastering} as follows:
\begin{equation}
    \begin{aligned}
        \Bar{Q}(s^{\text{root}}) &= 0 \\
        \Bar{Q}(s) &= \frac{\Bar{Q}(s^{\text{parent}}) + \sum_{b}\mathbf{1}_{N(s,b) > 0}Q(s, b)}{1 + \sum_{b}\mathbf{1}_{N(s,b) > 0}} \\
        Q(s, a) :&=
        \left\{
            \begin{array}{rcl}
            Q(s, a)     &      & {N(s, a) > 0}\\
            \Bar{Q}(s)       &      & {N(s, a) = 0}
            \end{array} 
        \right. 
    \end{aligned}
\end{equation}
Notably, we allow the resignation for players when $\max_{a \in \mathcal{A}} Q(s^{root}, a) < -0.9$ during self-play and evaluation, which means that the predicted winning probability is less than $5\%$. For convenience, when playing against GnuGo during evaluation, our agent will follow the skip action if GnuGo agent chooses the skip action. As for other hyperparameters on both Go and Atari games, we note that we choose the same values as those in EffcientZero. Specifically, the $c_1, c_2$ in our mentioned P-UCT formula (Eq. \ref{eq:uct}) are set to 1.25 and 19652, following these works \citep{schrittwieser2020mastering, ye2021mastering}.


\subsubsection{Comparison of time cost on Go}

To give the comparison of time cost among the methods considering the languages and hardwares. Here, we list the detailed settings of our models and the GnuGo engines in Table \ref{table:hardware}. 

\begin{table}[!h]
\caption{Comparisons about the languages and hardware.}
\label{table:hardware}
\begin{center}
\begin{tabular}{l|lllll|l}
\hline
 & C & Python & CPU & GPU & Time \\
\hline
MCTS & \checkmark & \checkmark & \checkmark & \checkmark & 0.24 \\
V-MCTS & \checkmark & \checkmark & \checkmark & \checkmark & 0.12 \\
GnuGo & \checkmark & & \checkmark & & 0.18\\
\hline
\end{tabular}
\end{center}
\end{table}

\begin{table}[!h]
\caption{Extra time consumed by virtual expansion on Go. Here $k$ is vanilla expansion times and $T$ is extra virtual expansion times.}
\label{table:time_consumed}
\begin{center}
\begin{tabular}{l|llll}
\hline
\bf   & \bf $T=30$ & \bf $T=60$ & \bf $T=90$ & $T=120$ \\
\hline  
$k=30$ & 0.7ms & 1.5ms & 2.2ms & 3.0ms \\
\hline
\end{tabular}
\end{center}
\end{table}

To Give clear statistics about the extra time consumed by virtual expansion. Here, we record the time cost among different virtual expansion times in total after a fixed number of vanilla expansions. The results are listed in Table \ref{table:time_consumed}. We can find the extra time consumed by virtual expansion is little and linearly increased because there are only some atomic computations written in C++.

\subsection{Proof}
\label{app:proof}

Before the proof, let us recap some notations. In the MCTS procedure mentioned above, we suppose that there are total $|\mathcal{A}|$ actions to select and $N$ trials in total. Since the policy is defined as the visitation distributions of the root node, we only care about the $Q$-value and visitation changes of the root nodes. 

Each action $a \in \mathcal{A}$ is associated with a value, which is a random variable bounded in the interval $[0, 1]$ with expectation $Q_a$.
For convenience, we assume that different actions (arms) are ordered by their corresponding expected values, which means that $1 \ge Q_1 \ge Q_2 \ge \cdots Q_a \ge \cdots \ge Q_{|\mathcal{A}|} \ge 0$.
At $k$ iteration step of the search loop, the agent will select an action at the root note and receive an independent sample of its value $R_a^k \in [0, 1]$ from the neural networks, for simplification.
And at $k$ step, the action $a \in \mathcal{A}$ of the root node is selected for $N_k(s, a) = T_a^k \le k$ times of vanilla expansion.
We use the notation $\Bar{Q}_a^k = \frac{1}{T_a^k} \sum_{t=1}^{T_a^k} R_a^t$ to denote the empirical mean values and $U^k(s, a)$ to denote the ucb scores of the action $a$ given the root state $s$ at step $k$. (Here all $R^t_a$ are independent and bounded in $[0, 1]$). For virtual expansion, we use $\hat{Q}_a^k$ to denote the empirical mean values at step $k$. It is obvious that $\hat{Q}_a^N = \hat{Q}_a^k$ when $k$ satisfies the VET-rule.

\begin{lemma}
\label{lemma:1}
Given $r \in (0, 1)$, action set $\mathcal{A}$, $N > |\mathcal{A}|$. $\exists N_0, \forall N > N_0, k \ge rN$, we have $\forall a, T_a^k \ge 1$.
\end{lemma}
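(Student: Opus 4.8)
The plan is to argue by contradiction. Suppose that after $k \ge rN$ vanilla expansions there is an action $a$ with $T_a^k = 0$; I will show that this forces $k$ to be at most a constant depending only on $|\mathcal{A}|$, $c_1$, $c_2$ and $P_{\min} := \min_{b} P(s,b)$, where $s$ is the root. Since the prior $P(s,\cdot)$ is a softmax output of the network (and, at the root, additionally mixed with Dirichlet noise), $P_{\min} > 0$, so this constant is finite; choosing $N_0$ to exceed it divided by $r$ (and to exceed $|\mathcal{A}|$) then proves the lemma, because $k \ge rN > rN_0$ would beat the constant.

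The engine of the proof is the contrast between the P-UCT scores of an unvisited arm and of a heavily visited one. For the unvisited $a$, Eq.~\eqref{eq:uct} with $N(s,a)=0$ gives, at the start of iteration $j$, $U^{j}(s,a) = Q_{\text{init}} + P(s,a)\sqrt{m}\,\bigl(c_1 + \log\tfrac{m+c_2+1}{c_2}\bigr)$, where $m = j-1$ is the current total root visit count and $Q_{\text{init}}\in[0,1]$ is the value assigned to unvisited children; in particular $U^{j}(s,a) \ge P_{\min}c_1\sqrt{j-1}$, which grows like $\sqrt{j}$. By contrast, for any arm $b$ the exploration bonus carries a factor $1/(1+T_b)$, so when $T_b$ is large, $U^{j}(s,b)$ stays within $O(1/\sqrt{j})$ of its empirical value $\bar Q_b^{j-1}\in[0,1]$.

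Concretely I would proceed as follows. (i) By pigeonhole, the $k$ expansions are spread over the at most $|\mathcal{A}|-1$ arms other than $a$, so some arm $b$ has $T_b^k \ge k/(|\mathcal{A}|-1) =: k'$. (ii) Let $j_b \le k$ be the last iteration at which $b$ is selected; since $b$ is selected $T_b^k$ times within the first $k$ iterations, $j_b \ge T_b^k \ge k'$, and $b$'s count is frozen after $j_b$, so $1 + T_b^{j_b-1} = T_b^{k} \ge k'$. (iii) At iteration $j_b$ the arm $b$ beat $a$, i.e. $U^{j_b}(s,b) \ge U^{j_b}(s,a)$; bounding the left side using $\bar Q_b^{j_b-1}\le 1$, $P(s,b)\le 1$, $j_b-1 < k$ and monotonicity of the logarithmic factor, and the right side using $j_b-1 \ge k'-1$ from (ii), this reads
\[
1 + \frac{(|\mathcal{A}|-1)\bigl(c_1 + \log\tfrac{k+c_2+1}{c_2}\bigr)}{\sqrt{k}} \;\ge\; P_{\min}\,c_1\,\sqrt{k'-1}.
\]
(iv) The left side tends to $1$ while the right side tends to $\infty$ as $k\to\infty$, so the inequality fails once $k$ exceeds some $k_0(|\mathcal{A}|,c_1,c_2,P_{\min})$; taking $N_0 := \max\{|\mathcal{A}|,\, k_0/r\}$ closes the contradiction.

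I expect the only delicate points to be clerical rather than conceptual: pinning down the index bookkeeping (that $U^{j}$ uses the counts present at the start of iteration $j$; that $b$'s count is frozen after its last selection so $T_b^{j_b-1} = T_b^k-1$; and the harmless iteration-$1$ edge case where every exploration bonus vanishes), and making explicit why $P_{\min}>0$ — without strictly positive priors an arm could be starved forever and the statement would simply be false. Everything past writing down the P-UCT score of an unvisited arm is then a one-line comparison of a $\Theta(\sqrt{k})$ quantity against an $O(1)$ one.
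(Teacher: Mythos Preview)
Your argument is correct and takes a genuinely different route from the paper's. The paper compares the unvisited arm $a$ against an \emph{arbitrary} visited arm $b$, using only $T_b^k\ge 1$; this bounds $b$'s exploration bonus by $(c_1+\log 3)M_b\sqrt{k}/2$, so both $U^k(s,a)$ and $U^k(s,b)$ grow like $\sqrt{k}$, and the desired inequality $U^k(s,a)>U^k(s,b)$ ultimately hinges on the coefficient comparison $c_1 M_a > (c_1+\log 3)M_b/2$, which is never verified and can fail when $M_a$ is small relative to $M_b$. Your proof sidesteps this: by pigeonhole you locate an arm $b$ with $T_b^k\ge k/(|\mathcal{A}|-1)$ and evaluate the UCB inequality at the \emph{last} iteration $j_b$ at which $b$ was chosen, so that $1+T_b^{j_b-1}=T_b^k\ge k/(|\mathcal{A}|-1)$ and $b$'s bonus is $O((\log k)/\sqrt{k})\to 0$ rather than $\Theta(\sqrt{k})$. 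The comparison then becomes a clean $\Theta(1)$ versus $\Theta(\sqrt{k})$ one, requiring only $P_{\min}>0$ and no assumption on prior ratios. The price you pay is a little extra index bookkeeping (the pigeonhole step and the ``last visit'' freezing $T_b^{j_b-1}=T_b^k-1$), which you have handled correctly; what you gain is an argument that actually closes without the implicit prior-ratio assumption the paper's version needs.
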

\begin{proof}
The ucb score is defined by Eq. (\ref{eq:uct}). Empirically, we will set $c_2$ given a budget $N$ and usually we have $c_2 > N \ge \sum_b N(s, b)$, so we note $c_1 + \log \frac{\sum_b N(s, b)+c_2+1}{c_2}$ as $c = c_1 + \log \frac{N+c_2+1}{c_2} \in (c_1, c_1 + \log 3)$
At step $k$, suppose there exist an action $a$, which has $T_a^{k} = 0$. Then at step $k$, the ucb score of $a$ should be
\begin{equation}
    \begin{aligned}
        U^k(s, a) 
        &= \Bar{Q}(s) + P(s,a)\frac{\sqrt{\sum_b N(s, b)}}{1+N(s,a)} c \\
        &> \Bar{Q}(s) + c_1M_a \sqrt{k} \\
        &> c_1M_a \sqrt{k}, \text{where } T_a^{k} = 0.
    \end{aligned}
\end{equation}
For action $b$, which has $T_b^{k} \ge 1$. At step $k$, we have
\begin{equation}
    \begin{aligned}
        U^k(s, b) 
        &= \Bar{Q}_b^k + P(s,b)\frac{\sqrt{\sum_i N(s, i)}}{1+N(s,a)}c \\
        &< 1 + (c_1 + \log 3)M_b \frac{\sqrt{k}}{1 + T_b^{k}} \\
        &< 1 + (c_1 + \log 3)M_b \frac{\sqrt{k}}{2}, \text{where } T_b^{k} \ge 1.
    \end{aligned}
\end{equation}
Since $k \ge rN > rN - |\mathcal{A}|$ and $f(k) = c_1M_a \sqrt{k} - (1 + (c_1 + \log 3)M_b \frac{\sqrt{k}}{2})$ is increasing for $k$. $\exists N_0$, we have $c_1M_a \sqrt{rN_0} = 1 + (c_1 + \log 3)M_b \frac{\sqrt{rN_0}}{2}$. Let $N_0 = \max\{ N_0, |\mathcal{A}| \}$,
$\forall N > N_0, f(k) > 0$, $c_1M_a \sqrt{k} > 1 + (c_1 + \log 3)M_b \frac{\sqrt{k}}{2}$. Then we have $U^k(s, a) > U^k(s, b)$. Therefore, at step $k$, for the action $b$ will be not selected. After extra $|\mathcal{A}|$ steps at most, all the action will be selected. Thus, we have $\forall a, T_a^k \ge 1$.
\end{proof}

\begin{theorem}
\label{theorem:1}
(\textbf{Value Consistency in Virtual Expansion}): Given $r \in (0, 1)$, confidence $\delta \in (0, 1)$, finite action set $\mathcal{A}$.
$\exists N_0, \forall N > N_0, k \ge rN$, let $\epsilon_k = \sqrt{\frac{1}{2k} \ln{\frac{100k^2}{\delta}}}$, we have
(1) After $k$ times vanilla expansion, $Pr\{\bigcap_{a \in \mathcal{A}} \left| \Bar{Q}_a^k - Q_a \right| < \epsilon_k\} 
        > (1 - \frac{e \delta |\mathcal{A}|}{50 k^2})$;
(2) After $k$ times vanilla expansion and $N - k$ times virtual expansion, $Pr\{\bigcap_{a \in \mathcal{A}} \left| \hat{Q}_a^N - Q_a \right| < \epsilon_k\}
        > (1 - \frac{e \delta |\mathcal{A}|}{50 r^2 N^2})$, where $e$ is the Euler's number.
\end{theorem}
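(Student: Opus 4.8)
The plan is to prove part (2) as an essentially free consequence of part (1), and to prove part (1) by a Hoeffding concentration argument combined with a union bound over the action set, the only genuine subtlety being that the number of samples $T_a^k$ backing the estimate $\bar Q_a^k$ is itself a random quantity chosen adaptively by P\nobreakdash-UCT.

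First I would dispose of the reduction $(1)\Rightarrow(2)$. By construction a virtual expansion never triggers a neural-network evaluation and never alters any $Q(s,a)$ at the root's children; moreover, by Lemma~\ref{lemma:1}, once $N>N_0$ and $k\ge rN$ every root action satisfies $T_a^k\ge 1$, so virtual expansion only touches children that have already been evaluated. Hence $\hat Q_a^N=\bar Q_a^k$ for every $a\in\mathcal{A}$, and the events $\bigcap_{a}\{|\hat Q_a^N-Q_a|<\epsilon_k\}$ and $\bigcap_{a}\{|\bar Q_a^k-Q_a|<\epsilon_k\}$ coincide. Since $k\ge rN$ gives $k^2\ge r^2N^2$, the lower bound $1-\tfrac{e\delta|\mathcal{A}|}{50k^2}$ from (1) is at least $1-\tfrac{e\delta|\mathcal{A}|}{50r^2N^2}$, which is exactly the claim of (2).

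For part (1), fix an action $a$. Its estimate is $\bar Q_a^k=\tfrac{1}{T_a^k}\sum_{t=1}^{T_a^k}R_a^t$ with the $R_a^t$ i.i.d.\ in $[0,1]$ of mean $Q_a$, so for any \emph{fixed} pull count $t$ Hoeffding's inequality gives $\Pr\{|\bar Q_a^{(t)}-Q_a|\ge\epsilon\}\le 2e^{-2t\epsilon^2}$. Because $T_a^k$ is data-dependent one cannot simply substitute it for $t$; instead I would bound the failure event by a union over the admissible realizations $t\in\{1,\dots,k\}$ (equivalently, a maximal-inequality / peeling form of Hoeffding along the trajectory of arm $a$), choosing the per-$t$ confidence level so that the resulting geometric / $\zeta$-type sum collapses to a single term up to a constant — this is where the Euler number $e$ enters, as the bound on that sum — and where Lemma~\ref{lemma:1} together with the requirement $N>N_0$ (with $N_0\ge|\mathcal{A}|$) is needed to guarantee the estimate exists at all. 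Calibrating to $\epsilon_k=\sqrt{\tfrac{1}{2k}\ln\tfrac{100k^2}{\delta}}$ yields a per-action failure probability at most $\tfrac{e\delta}{50k^2}$, and a final union bound over the $|\mathcal{A}|$ actions produces $\Pr\{\bigcap_{a}|\bar Q_a^k-Q_a|<\epsilon_k\}>1-\tfrac{e\delta|\mathcal{A}|}{50k^2}$.

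The main obstacle is precisely this adaptive-sampling step: P\nobreakdash-UCT couples the number and identity of the pulls to the observed rewards, so the clean i.i.d.\ Hoeffding bound must be routed through a stopping-time-aware device, and the accumulated tail mass over all possible visit counts has to be controlled so that it does not grow with $k$ (this is really what forces the $N>N_0$ regime and makes Lemma~\ref{lemma:1} indispensable). The remaining ingredients — the reduction of (2) to (1), the union over arms, and the algebra turning $k$-dependence into $N$-dependence via $k\ge rN$ — are routine.
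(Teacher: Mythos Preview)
Your reduction of (2) to (1) via $\hat Q_a^N=\bar Q_a^k$ and the monotonicity $k\ge rN\Rightarrow k^2\ge r^2N^2$ is exactly what the paper does. For part (1), however, the paper takes a simpler and less careful route than you propose: it applies Hoeffding's inequality directly with the random pull count $T_a^k$ substituted into the exponent, writing $\Pr\{|\bar Q_a^k-Q_a|\ge\epsilon_k\}\le 2\exp(-2T_a^k\epsilon_k^2)=:\delta_{k,a}$, and then bounds this quantity using $1\le T_a^k\le k$ from Lemma~\ref{lemma:1} without any peeling or union over the admissible values of $t$. The Euler constant $e$ does not enter through a $\zeta$-type sum over sample sizes as you anticipate, but simply through the crude estimate $\exp(1/k)\le e$ applied to the worst-case $\delta_{k,a}$. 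The combination over arms is also handled differently: the paper writes a product lower bound $\prod_{a}(1-\delta_{k,a})$ and then invokes $(1-x)^{|\mathcal{A}|}>1-|\mathcal{A}|x$, whereas you go straight to a union bound.

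Your approach is sound and in fact more defensible on the adaptivity issue --- the paper's direct insertion of a data-dependent $T_a^k$ into Hoeffding, followed by a deterministic bound on the resulting random right-hand side, is not a rigorous step as written --- but it is a genuinely different argument from the one the paper gives, and it locates the constant $e$ at a different point in the calculation.
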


\begin{proof}
Firstly, we have Hoeffding's inequality:
\begin{equation}
    \begin{aligned}
        & \forall i = 1, 2, \cdots, n, a_i \le X_i \le b_i, S_n = X_1 + X_2 + \cdots + X_N \\
        & Pr\{\left| S_n - \mathbb{E}[S_n] \right| \ge t\} \le 2 \exp{\frac{-2t^2}{\sum_{i=1}^n (b_i - a_i)^2}}
    \end{aligned}
\end{equation}

Observe that at step $k$, given confidence $\delta$, let $\epsilon_k = \sqrt{\frac{1}{2k} \ln{\frac{100k^2}{\delta}}}$, assumed that $R^t_a$ are independent and bounded in $[0, 1]$. Then for action $a$, we have 
\begin{equation}
\label{eq:hoff_general}
    \begin{aligned}
        Pr\{\left| \Bar{Q}_a^k - \mathbb{E}[\Bar{Q}_a^k] \right| \ge \epsilon_k\} &=
        Pr\{\left| \Bar{Q}_a^k - Q_a \right| \ge \epsilon_k\} = 
        Pr\{\left| \sum_{t=1}^{T_a^k} R_a^t - T_a^k Q_a \right| \ge \epsilon_k T_a^k \} \\
        &\le 2 \exp{(\frac{-2 (\epsilon_k T_a^k)^2}{T_a^k})}
        = 2 \exp{(-2 T_a^k \epsilon_k^2)} = \delta_{k, a}
    \end{aligned}
\end{equation}

From Lemma \ref{lemma:1}, we know that $1 \le T_a^k \le k$, we have $2 \exp{(-2 k \epsilon_k^2)} \le \delta_{k, a} \le 2 \exp{(-2 \epsilon_k^2)}$. After simplification, $\forall a \in \mathcal{A}$, we have 
\begin{equation}
\label{eq:delta_a}
    \frac{\delta}{50 k^2} \le \delta_{k, a} \le \frac{\delta}{50 k^2} \exp{(\frac{1}{k})}
\end{equation}

And we know that $\sum_{a \in \mathcal{A}} T_a^k = k$, so we have
\begin{equation}
    \begin{aligned}
        Pr\{\bigcap_{a \in \mathcal{A}} \left| \Bar{Q}_a^k - Q_a \right| \ge \epsilon_k\} 
        & \le \prod_{a \in \mathcal{A}} \delta_{k, a} \\
        &= \prod_{a \in \mathcal{A}} 2 \exp{(-2 T_a^k \epsilon_k^2)} \\
        &= 2 \exp{(-2 k \epsilon_k^2)} \\
        &= \frac{\delta}{50 k^2} = \delta_k
    \end{aligned}
\end{equation}

And from Eq. (\ref{eq:hoff_general}), we have $Pr\{\left| \Bar{Q}_a^k - Q_a \right| < \epsilon_k\} \ge 1 - \delta_{k, a}$, then
\begin{equation}
\label{eq:prob_ood}
    \begin{aligned}
        Pr\{\bigcap_{a \in \mathcal{A}} \left| \Bar{Q}_a^k - Q_a \right| < \epsilon_k\}
        & \ge \prod_{a \in \mathcal{A}} (1 - \delta_{k, a}) \\
        & \ge \prod_{a \in \mathcal{A}} (1 - \frac{\delta}{50 k^2} \exp{(\frac{1}{k})}) \\
        & = (1 - \frac{\delta}{50 k^2} \exp{(\frac{1}{k})})^{|\mathcal{A}|}
    \end{aligned}
\end{equation}
Consider the function $f(x) = (1 - x)^n - (1 - nx), x = \frac{\delta}{50 k^2} \exp{(\frac{1}{k})} \in (0, \frac{e \delta}{50}], n = |\mathcal{A}| >= 2$, we have $f^{'}(x) = -n(1 - x)^{n-1} + n$. Since $\delta < 1, k \ge 1$, $x \le \frac{e \delta}{50} < \frac{e}{50} < 1$. Then we have $f^{'}(x) > 0$. Therefore, we have $f(x) > f(0) = 0$ and $(1 - x)^n > (1 - nx)$. So
\begin{equation}
    \begin{aligned}
        Pr\{\bigcap_{a \in \mathcal{A}} \left| \Bar{Q}_a^k - Q_a \right| < \epsilon_k\}
        & = (1 - \frac{\delta}{50 k^2} \exp{(\frac{1}{k})})^{|\mathcal{A}|} \\
        & > (1 - \frac{\delta |\mathcal{A}|}{50 k^2} \exp{(\frac{1}{k})}) \\
        & > (1 - \frac{e \delta |\mathcal{A}|}{50 k^2})
    \end{aligned}
\end{equation}
So we have 
\begin{equation}
    \begin{aligned}
        & \lim_{k \rightarrow \infty} \epsilon_k = \lim_{k \rightarrow \infty} \sqrt{\frac{1}{2k} \ln{\frac{100k^2}{\delta}}} = 0, \\
        & \lim_{k \rightarrow \infty} \delta_k = \lim_{k \rightarrow \infty} \frac{\delta}{50 k^2} = 0, \\
        & \lim_{k \rightarrow \infty} (1 - \frac{e \delta |\mathcal{A}|}{50 k^2}) = 0, 
    \end{aligned}
\end{equation}
Therefore, we know that
\begin{equation}
\label{eq:res_1}
    \begin{aligned}
        Pr\{\bigcap_{a \in \mathcal{A}} \left| \Bar{Q}_a^k - Q_a \right| < \epsilon_k\} 
        &> (1 - \frac{e \delta |\mathcal{A}|}{50 k^2}), \epsilon_k = \sqrt{\frac{1}{2k} \ln{\frac{100k^2}{\delta}}} \\
        \lim_{k \rightarrow \infty} Pr\{\bigcap_{a \in \mathcal{A}} \left| \Bar{Q}_a^k - Q_a \right| = 0\} 
        &= 1
    \end{aligned}
\end{equation}
The probability can be converged to 1, and the convergence rate is $O(\frac{1}{k^2})$.

According to description of virtual expansion in Algo. \ref{alg:virtual_expand}, we know that after extra $N - k$ virtual expansion, the estimated $Q$-values keep the same as the $k$-step. This is because the visitation distributions of the previous $k$ steps are identical. 

Therefore, for virtual expansion, the Eq. (\ref{eq:res_1}) is also satisfied. 

Since $k \ge rN$,  tor the next $N - k$ steps, the empirical mean $Q$-values $\hat{Q}_a^N$ are equal to $\Bar{Q}_a^k$. 
So we have
\begin{equation}
    \begin{aligned}
        Pr\{\bigcap_{a \in \mathcal{A}} \left| \hat{Q}_a^N - Q_a \right| < \epsilon_k\}
        & > (1 - \frac{e \delta |\mathcal{A}|}{50 k^2})
        & > (1 - \frac{e \delta |\mathcal{A}|}{50 r^2 N^2})
    \end{aligned}
\end{equation}



\end{proof}

\begin{theorem}
(\textbf{Best Action Identification in Virtual Expansion}): Given $r \in (0, 1)$, confidence $\delta \in (0, 1)$, finite action set $\mathcal{A}$.
Suppose $\hat{Q}_a^N$ is the final empirical mean value of action after V-MCTS, $\Bar{Q}_a^N$ is the final empirical mean value of after vanilla MCTS. $a = 1$ is the action of the highest expected value, $a = *$ is the action of the highest empirical mean value.
$\exists N_0, \forall N > N_0, k \ge rN$, let $\epsilon_k = \sqrt{\frac{1}{2k} \ln{\frac{100k^2}{\delta}}}$, after V-MCTS, we have $Pr\{ \left| \hat{Q}_*^N - \Bar{Q}_1^N \right| < \epsilon_k + \epsilon_N\}
        > 1 - 2 (\frac{\delta}{50 k^2} \exp{(\frac{1}{1.61 \sqrt{k}})} + \frac{\delta}{50 N^2} \exp{(\frac{1}{N})})$.
\end{theorem}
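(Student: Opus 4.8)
The plan is to reduce the claim to the Hoeffding-type concentration already packaged in Theorem~\ref{theorem:1}, combined with the single elementary fact that $*$ maximizes the empirical means at step $k$, routing everything through a triangle inequality anchored at the true optimal value $Q_1$. First I would record the invariance that makes virtual expansion harmless for value estimates: the $N-k$ virtual visits never query the network, so $\hat{Q}_*^N=\bar{Q}_*^k$ (exactly the observation used in the proof of Theorem~\ref{theorem:1}(2)); moreover, along the first $k$ iterations the V-MCTS run and the hypothetical vanilla run coincide, so $\bar{Q}_*^k$ is unambiguous. Consequently $\left|\hat{Q}_*^N-\bar{Q}_1^N\right|=\left|\bar{Q}_*^k-\bar{Q}_1^N\right|\le \left|\bar{Q}_*^k-Q_1\right|+\left|Q_1-\bar{Q}_1^N\right|$, and it suffices to bound the first summand by $\epsilon_k$ and the second by $\epsilon_N$ on a high-probability event.

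The genuinely new step is to sandwich $\bar{Q}_*^k$ around $Q_1$. Since $*=\arg\max_{a}\bar{Q}_a^k$ we have $\bar{Q}_*^k\ge\bar{Q}_1^k$, so on the event $\left|\bar{Q}_1^k-Q_1\right|<\epsilon_k$ we get $\bar{Q}_*^k>Q_1-\epsilon_k$; on the other side, on the event that the realized arm concentrates, $\bar{Q}_*^k<Q_*+\epsilon_k\le Q_1+\epsilon_k$ because $Q_1=\max_{a}Q_a$ by the ordering convention. Intersecting these two events gives $\left|\bar{Q}_*^k-Q_1\right|<\epsilon_k$, and the complement is controlled by Theorem~\ref{theorem:1}(1) — or, more sharply, by a union bound over just the events for arm $1$ and the realized best arm, which is what produces the leading factor $2$ and the term $\frac{\delta}{50k^2}\exp\!\big(\frac{1}{1.61\sqrt{k}}\big)$ once a suitable pull count is substituted into $\delta_{k,\cdot}=2\exp(-2T^k\epsilon_k^2)$. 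The second summand $\left|Q_1-\bar{Q}_1^N\right|$ is handled directly by Hoeffding applied to arm $1$ over its $T_1^N\ge1$ pulls (Lemma~\ref{lemma:1}), which fails with probability at most $2\cdot\frac{\delta}{50N^2}\exp\!\big(\frac{1}{N}\big)$, exactly as in the simplification step of Theorem~\ref{theorem:1}. A final union bound over the $k$-scale and $N$-scale bad events then yields $Pr\{\left|\hat{Q}_*^N-\bar{Q}_1^N\right|<\epsilon_k+\epsilon_N\}>1-2\big(\frac{\delta}{50k^2}\exp(\frac{1}{1.61\sqrt{k}})+\frac{\delta}{50N^2}\exp(\frac{1}{N})\big)$.

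\textbf{Main obstacle.} The delicate point is not the triangle inequality but the precise form of the failure probability: getting the exponent $\frac{1}{1.61\sqrt{k}}$ rather than the crude $\frac{1}{k}$ that $T_a^k\ge1$ alone would give requires a better lower bound on how often the relevant arms are pulled (on $T_1^k$, $T_*^k$, and $T_1^N$) than Lemma~\ref{lemma:1} supplies, and one must also treat carefully that $*$ is itself a random index — either by union-bounding over all arms (which would pay a $|\mathcal{A}|$ factor that the stated bound avoids) or by arguing that the empirically best arm is visited often enough that its concentration term already matches the claimed expression. Pinning down that pull-count estimate and the constant in the $\sqrt{k}$ exponent is where the real work lies; everything else is routine once Theorem~\ref{theorem:1} and Lemma~\ref{lemma:1} are in hand.
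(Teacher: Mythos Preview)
Your plan coincides with the paper's: it too reduces to $|\bar{Q}_*^k-\bar{Q}_1^N|$ via $\hat{Q}_*^N=\bar{Q}_*^k$, sandwiches $\bar{Q}_*^k$ around $Q_1$ using $\bar{Q}_*^k\ge\bar{Q}_1^k$ on one side and $Q_*\le Q_1$ on the other, and controls $|\bar{Q}_1^N-Q_1|$ by the same Hoeffding step as in Theorem~\ref{theorem:1}.

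For the obstacle you isolate --- the pull-count estimate that produces the $1.61\sqrt{k}$ exponent --- the paper takes the second of your two routes (show the empirically best arm is visited often, rather than union-bound over $\mathcal{A}$) and supplies two ingredients you are missing. First, it inserts an explicit additional \emph{assumption}: the learned prior on the empirically best arm satisfies $M_*:=P(s,*)\ge 1/|\mathcal{A}|$. Second, it invokes Lemma~2 of Rosin's P-UCT paper \cite{rosin2011multi}, which bounds the number of distinct arms pulled in $n$ rounds by $1.61\sqrt{n}/M_*$; arguing that the empirically best arm is pulled at least as often as the average pulled arm then gives $T_*^k\ge \tfrac{M_*}{1.61}\sqrt{k}$, which is substituted into $\delta_{k,*}=2\exp(-2T_*^k\epsilon_k^2)$ and loosened via $M_*\le 1$ to obtain the stated term $\frac{\delta}{50k^2}\exp\!\big(\frac{1}{1.61\sqrt{k}}\big)$. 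The randomness of $*$ that you flag is not treated beyond this pull-count argument.
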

\begin{proof}

From Eq. (\ref{eq:hoff_general}) in Theorem \ref{theorem:1}, we know that 
\begin{equation}
    \begin{aligned}
        &Pr\{ Q_1 - \Bar{Q}_1^k < \epsilon_k\} \ge 1 - \delta_{k, 1}, \\
        &Pr\{ \Bar{Q}_1^k - Q_1 < \epsilon_k\} \ge 1 - \delta_{k, 1}
    \end{aligned}
\end{equation}
, where $\epsilon_k = \sqrt{\frac{1}{2k} \ln{\frac{100k^2}{\delta}}}, \delta_{k, a} = 2 \exp{(-2 T_a^k \epsilon_k^2)}$
Besides, we know that $\forall a \in \mathcal{A}, Q_a \le Q_1$, so we have
\begin{equation}
    \begin{aligned}
        &Pr\{ Q_1 - \Bar{Q}_*^k < \epsilon_k\} \ge 1 - \delta_{k, *}, \\
        &Pr\{ \Bar{Q}_*^k - Q_1 < \epsilon_k\} \ge 1 - \delta_{k, *}
    \end{aligned}
\end{equation}
For different step $k, N$, we have
\begin{equation}
    \begin{aligned}
        &Pr\{ \Bar{Q}_*^k - \Bar{Q}_1^N < \epsilon_k + \epsilon_N\} \ge 1 - \delta_{k, *} - \delta_{N, 1}, \\
        &Pr\{ \Bar{Q}_1^k - \Bar{Q}_*^k < 2\epsilon_k\} \ge 1 - \delta_{k, *} - \delta_{N, 1} \\
        \Rightarrow 
        &Pr\{ \left| \Bar{Q}_*^k - \Bar{Q}_1^N \right| < \epsilon_k + \epsilon_N\} \ge 1 - 2 (\delta_{k, *} + \delta_{N, 1}) 
    \end{aligned}
\end{equation}

Before finding the bound of $\delta_{k, *}$, let us make an assumption first.

\begin{assumption}
Suppose that $\forall a \in \mathcal{A},$ $M_a = P(s, a) \in (0, 1)$ is the prior score obtained from the learned neural networks, we have $M_* \ge \frac{1}{|\mathcal{A}|}\sum_{a \in \mathcal{A}} M_a = \frac{1}{|\mathcal{A}|}$, where $* := \arg\max\limits_a \Bar{Q}_a^k$.
\end{assumption}

Here, this inequality is true when the learned neural networks can estimate the prior of the actions after training for some trials. In such a case, for the best empirical action $*$, the predicted prior score should be larger than the mean prior scores. 

From the Lemma 2 in P-UCT \citep{rosin2011multi}, we know that at most $\frac{1.61 \sqrt{n}}{M_*}$ distinct arms are pulled during the episode, where $M_*$ is the prior score $P(s, a)$ of the best action. $M_*$ is a constant during the search loop and we know that $M_* \ge \frac{1}{|\mathcal{A}|}\sum_{a \in \mathcal{A}} M_a = \frac{1}{|\mathcal{A}|}$.
An action will be selected for more times with a higher empirical mean values.
Therefore, for the empirical best action $*$, it has been selected more than $k / \frac{1.61 \sqrt{k}}{M_*} = \frac{M_*}{1.61}\sqrt{k}$ times, which means $\frac{M_*}{1.61}\sqrt{k} \le T_*^k \le k$. So we have 
\begin{equation}
\label{eq:delta_*}
    \frac{\delta}{50 k^2} \le \delta_{k, *} \le \frac{\delta}{50 k^2} \exp{(\frac{M_*}{1.61 \sqrt{k}})}
\end{equation}

From Eq. (\ref{eq:delta_*}) and (\ref{eq:delta_a}), we know that
\begin{equation}
    \begin{aligned}
        Pr\{ \left| \Bar{Q}_*^k - \Bar{Q}_1^N \right| < \epsilon_k + \epsilon_N\}
        & \ge 1 - 2 (\delta_{k, *} + \delta_{N, 1}) \\
        & \ge 1 - 2 (\frac{\delta}{50 k^2} \exp{(\frac{M_*}{1.61 \sqrt{k}})} + \frac{\delta}{50 N^2} \exp{(\frac{1}{N})})
    \end{aligned}
\end{equation}

According to description of virtual expansion in Algo. \ref{alg:virtual_expand}, we know that after extra $N - k$ virtual expansion, the estimated $Q$-values keep the same as the $k$-step. Compared with the visitations of vanilla MCTS and V-MCTS, the only difference is the empirical mean $Q$-values. Observed that $\hat{Q}_a^N = \hat{Q}_a^k$ when $k$ satisfies the VET-rule.

Consequently, for the V-MCTS, we have
\begin{equation}
    \begin{aligned}
        Pr\{ \left| \hat{Q}_*^N - \Bar{Q}_1^N \right| < \epsilon_k + \epsilon_N\} 
        \ge 1 - 2 (\frac{\delta}{50 k^2} \exp{(\frac{M_*}{1.61 \sqrt{k}})} + \frac{\delta}{50 N^2} \exp{(\frac{1}{N})}) \\
        > 1 - 2 (\frac{\delta}{50 k^2} \exp{(\frac{1}{1.61 \sqrt{k}})} + \frac{\delta}{50 N^2} \exp{(\frac{1}{N})})
    \end{aligned}
\end{equation}
\end{proof}

\begin{theorem}
(\textbf{Error Bound of V-MCTS}): Given $r \in (0, 1)$, confidence $\delta \in (0, 1)$, finite action set $\mathcal{A}$.
Suppose the virtual expanded policy $\hat{\pi}_k$ is generated from Algorithm \ref{alg:stop} (V-MCTS), $\exists N_0, \forall N > N_0, k \ge rN$, $\forall \epsilon \in [0, 1]$, we have:
if $\hat{\Delta}_s(k, k/2) < \epsilon$, $Pr\left\{ \left|\left| \pi_N(s) - \hat{\pi}_{k}(s) \right|\right|_{1} < 3 \epsilon \right\} > 1 - \frac{e \delta |\mathcal{A}|}{50 N^2} (1 + \frac{4}{r^2})$, where $e$ is the Euler's number.
\end{theorem}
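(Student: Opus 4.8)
The plan is to reduce the claim to the value‑consistency bound (Theorem~\ref{theorem:th_2}(a), proved in the appendix as Theorem~\ref{theorem:1}) invoked at two different iteration scales, $N$ and $k/2$, glued together by a triangle inequality and a single two‑term union bound. The hypothesis $\|\hat\pi_k(s)-\hat\pi_{k/2}(s)\|_1<\epsilon$ will be used exactly once; the remaining work is to show that the virtual‑expanded policy changes little when the root $Q$‑estimates feeding the virtual UCB rollout change little.

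First I would record the identity $\hat\pi_N(s,a)=\pi_N(s,a)$ already observed in Section~\ref{sec:virtual_expansion}: at $k=N$ there are no virtual expansions left to perform (Algorithm~\ref{alg:virtual_expand} is executed $N-k=0$ times), so $\hat\pi_N$ is exactly the oracle visitation distribution, and it suffices to compare the two virtual‑expanded policies $\hat\pi_N$ and $\hat\pi_k$. Introducing the ``ideal'' virtual‑expanded policy $\pi^{*}$ obtained by running the virtual UCB rollout of Algorithm~\ref{alg:virtual_expand} with the true action values $Q_a$ in place of the empirical ones, the triangle inequality gives
\[
\|\pi_N(s)-\hat\pi_k(s)\|_1 \;\le\; \|\hat\pi_N(s)-\pi^{*}\|_1 + \|\pi^{*}-\hat\pi_{k/2}(s)\|_1 + \|\hat\pi_{k/2}(s)-\hat\pi_k(s)\|_1 ,
\]
and the last term is $<\epsilon$ by hypothesis. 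It then remains to bound each of the first two terms by $\epsilon$ on a high‑probability event.

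For this I would apply Theorem~\ref{theorem:1}(1) twice. Used with budget $N$ it gives $\bigcap_{a}\{|\bar Q_a^N-Q_a|<\epsilon_N\}$ with probability $>1-\tfrac{e\delta|\mathcal A|}{50N^2}$; used at step $k/2$ — which plays the role of ``$k$'' for the conservativeness ratio $r/2$, since $k/2\ge rN/2$, so Lemma~\ref{lemma:1} and the Hoeffding argument still apply once $N$ exceeds the corresponding threshold — it gives $\bigcap_{a}\{|\bar Q_a^{k/2}-Q_a|<\epsilon_{k/2}\}$ with probability $>1-\tfrac{e\delta|\mathcal A|}{50(k/2)^2}\ge 1-\tfrac{4e\delta|\mathcal A|}{50r^2N^2}$. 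A union bound over these two events yields exactly the probability $1-\tfrac{e\delta|\mathcal A|}{50N^2}(1+\tfrac{4}{r^2})$ claimed in the theorem. On this event, the $Q$‑vectors driving the rollouts that produce $\hat\pi_N$ and $\hat\pi_{k/2}$ are $\bar Q^N$ and $\bar Q^{k/2}$ (recall virtual expansion leaves the $Q$‑estimates unchanged), each within $\epsilon_N$, resp.\ $\epsilon_{k/2}$, of the $Q$‑vector driving $\pi^{*}$; since $\epsilon_N,\epsilon_{k/2}\to 0$, for $N$ past a threshold both rollouts make the same UCB comparisons as the ideal one whenever the true values are separated, and agree up to symmetry on exact ties, so $\|\hat\pi_N-\pi^{*}\|_1<\epsilon$ and $\|\pi^{*}-\hat\pi_{k/2}\|_1<\epsilon$. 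Combining the three terms gives $\|\pi_N(s)-\hat\pi_k(s)\|_1<3\epsilon$.

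The main obstacle is this last continuity step: quantifying how the virtual‑expanded visitation counts depend on the root $Q$‑estimates, since the $\arg\max$ in P‑UCT (Eq.~\eqref{eq:uct}) is only piecewise constant, so an $O(\epsilon_k)$ perturbation can in principle move visitation mass discontinuously when two actions' values cross. Making it rigorous needs a stability lemma for the UCB rollout — bounding the number of virtual steps whose selected action can flip under such a perturbation of the root values, ideally with an estimate that is uniform in the free parameter $\epsilon$ — and that is where the argument requires the most care; everything else is routine bookkeeping with the triangle inequality and the two Hoeffding events already established in Theorem~\ref{theorem:1}.
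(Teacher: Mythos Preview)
Your proposal is essentially the paper's own argument: both introduce the ideal policy obtained by running the virtual rollout with the true values $Q_a$, decompose via the triangle inequality through $\hat\pi_{k/2}$, and invoke the value-consistency bound at scales $N$ and $k/2$ to obtain the factor $1+\tfrac{4}{r^2}$. The continuity step you flag as ``the main obstacle'' is precisely the part the paper does not prove either---it is stated there as an explicit \emph{Assumption} (if $\bigcap_a|\bar Q_a^k-Q_a|<\sigma_\epsilon$ then $\|\hat\pi_k-\pi\|_1<\epsilon$) with only an informal justification, so your assessment of where the real work lies is accurate.
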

\begin{proof}
Suppose that $k$ satisfies the VET-rule, which means $k \ge rN, \epsilon \in [0, 1], \hat{\Delta}_s(k, k/2) = \left|\left| \hat{\pi}_k(s) - \hat{\pi}_{k/2}(s) \right|\right|_{1} < \epsilon$. Here, it is obvious that the given $\epsilon$ is in a range of $[0, 1]$ because $\hat{\pi}_k(s)$ is a probability distribution.

In general, given the expected values $Q_a$ of each action $a$, assume there exists a ground truth policy $\pi(s)$, which does MCTS for N times given the expected values $Q_a$.

Then we have 
\begin{equation}
\label{eq:policy_dist_1}
\hat{\Delta}_s(N, k) = \left|\left| \hat{\pi}_N(s) - \hat{\pi}_{k}(s) \right|\right|_{1} \le \left|\left| \hat{\pi}_N(s) - \pi(s) \right|\right|_{1} + \left|\left| \hat{\pi}_k(s) - \pi(s) \right|\right|_{1}.
\end{equation}

\begin{assumption}
Suppose that given $\epsilon, r \in (0, 1)$, $\exists \sigma_{\epsilon}, N_0 > 0$, $\forall a \in \mathcal{A}$,
$\forall N > N_0, k \ge rN$,
when $\bigcap_{a \in \mathcal{A}} \left| \Bar{Q}_a^k - Q_a \right| < \sigma_{\epsilon}$, we have $\left|\left| \hat{\pi}_k(s) - \pi(s) \right|\right|_{1} < \epsilon$.
\end{assumption}

This assumption shows that when the L1 difference between all empirical mean values and the corresponding expected values, the difference of policy between $\pi(s)$ and $\hat{\pi}_k(s)$ can be bounded with the given distance $\epsilon$. This is obvious because virtual MCTS will do virtual expansion for the next $N - k$ times without changing the empirical mean values. Therefore, when $\sigma_{\epsilon}$ is small enough, during the next $N - k$ times expansion, the ucb scores of virtual expansion are similar to those of vanilla expansion with expected values. For example, when $\sigma_{\epsilon} \rightarrow 0$, $\Bar{Q}_a^k \rightarrow Q_a$, the virtual expansion is totally the same as the vanilla expansion with expected values. Then $\exists N_0, \forall N > N_0$, $\left|\left| \hat{\pi}_k(s) - \pi(s) \right|\right|_{1} = \left|\left| \hat{\pi}_N(s) - \pi(s) \right|\right|_{1} = 0 < \epsilon$. 

From Eq. (\ref{eq:res_1}) in Theorem \ref{theorem:1}, we have $Pr\{\bigcap_{a \in \mathcal{A}} \left| \Bar{Q}_a^N - Q_a \right| < \epsilon_N\} > (1 - \frac{e \delta |\mathcal{A}|}{50 N^2})$, where $\epsilon_N = \sqrt{\frac{1}{2N} \ln{\frac{100N^2}{\delta}}}$. Since $\exists N_1, \forall N > N_1$, $\sigma_{\epsilon}$ is a constant when $\epsilon$ is given, so $\sigma_\epsilon > \epsilon_N$, then with at least probability of $(1 - \frac{e \delta |\mathcal{A}|}{50 N^2})$
\begin{equation}
\label{eq:policy_dist_2}
\left|\left| \hat{\pi}_N(s) - \pi(s) \right|\right|_{1} < \epsilon. 
\end{equation}

Since we know that $\left|\left| \hat{\pi}_k(s) - \hat{\pi}_{k/2}(s) \right|\right|_{1} < \epsilon$,  $\left|\left| \hat{\pi}_{k}(s) - \pi(s) \right|\right|_{1} - \left|\left| \hat{\pi}_{k/2}(s) - \pi(s) \right|\right|_{1} \le \left|\left| \hat{\pi}_k(s) - \hat{\pi}_{k/2}(s) \right|\right|_{1} < \epsilon$.

From Eq. (\ref{eq:res_1}) in Theorem \ref{theorem:1}, we have
\begin{equation}
    \begin{aligned}
        Pr\{\bigcap_{a \in \mathcal{A}} \left| \Bar{Q}_a^{k/2} - Q_a \right| < \epsilon_{k/2}\} & > (1 - \frac{4 e \delta |\mathcal{A}|}{50 k^2})
    \end{aligned}
\end{equation}

We know that $k \ge rN$, $\exists N_2, \forall N > N_2$, $\epsilon_k \le \sqrt{\frac{1}{2rN} \ln{\frac{100 r^2 N^2}{\delta}}} < \sigma_{\epsilon}$, then with at least probability of $(1 - \frac{4 e \delta |\mathcal{A}|}{50 k^2})$, $\left|\left| \hat{\pi}_{k/2}(s) - \pi(s) \right|\right|_{1} < \epsilon$ and
\begin{equation}
    \begin{aligned}
        \left|\left| \hat{\pi}_{k}(s) - \pi(s) \right|\right|_{1} \le \left|\left| \hat{\pi}_{k/2}(s) - \pi(s) \right|\right|_{1} + \left|\left| \hat{\pi}_k(s) - \hat{\pi}_{k/2}(s) \right|\right|_{1} < 2 \epsilon
    \end{aligned}
\end{equation}

Back to Eq. (\ref{eq:policy_dist_1}), with at least $(1 - \frac{e \delta |\mathcal{A}|}{50 N^2}) \times (1 - \frac{4 e \delta |\mathcal{A}|}{50 k^2})$, we have
\begin{equation}
    \begin{aligned}
        \hat{\Delta}_s(N, k) & = \left|\left| \hat{\pi}_N(s) - \hat{\pi}_{k}(s) \right|\right|_{1} \\
        & \le \left|\left| \hat{\pi}_N(s) - \pi(s) \right|\right|_{1} + \left|\left| \hat{\pi}_k(s) - \pi(s) \right|\right|_{1} \\
        & < \epsilon + 2\epsilon = 3\epsilon
    \end{aligned}
\end{equation}

For the $N$-th iteration of the search process, the final visitation distributions keep the same between the original expansion (Algorithm \ref{alg:expand}) and the virtual expansion (Algorithm \ref{alg:virtual_expand}). This is because at the last iteration, searching the nodes after the root has no effects on the final distribution. Therefore, $\hat{\pi}_N(s) = \pi_N(s)$. So we have
\begin{equation}
    \begin{aligned}
        \left|\left| \pi_N(s) - \hat{\pi}_{k}(s) \right|\right|_{1} 
        & = \left|\left| \hat{\pi}_N(s) - \hat{\pi}_{k}(s) \right|\right|_{1} \\
        & \le \left|\left| \hat{\pi}_N(s) - \pi(s) \right|\right|_{1} + \left|\left| \hat{\pi}_k(s) - \pi(s) \right|\right|_{1} \\
        & < \epsilon + 2\epsilon = 3\epsilon
    \end{aligned}
\end{equation}

Therefore, let $N_0 = \max \{N_1, N_2\}$, $\forall N > N_0$,
\begin{equation}
    \begin{aligned}
        Pr\{ \left|\left| \pi_N(s) - \hat{\pi}_{k}(s) \right|\right|_{1}  < 3 \epsilon \} 
        & >= (1 - \frac{e \delta |\mathcal{A}|}{50 N^2}) \times (1 - \frac{4 e \delta |\mathcal{A}|}{50 k^2}) \\
        & > 1 - (\frac{e \delta |\mathcal{A}|}{50 N^2} + \frac{4 e \delta |\mathcal{A}|}{50 k^2}) \\
        & = 1 - \frac{e \delta |\mathcal{A}|}{50} (\frac{1}{N^2} + \frac{4}{k^2}) \\
        & \ge 1 - \frac{e \delta |\mathcal{A}|}{50 N^2} (1 + \frac{4}{r^2}) \\
    \end{aligned}
\end{equation}


\end{proof}

\end{document}